\documentclass[10pt]{article} 
\usepackage[accepted]{tmlr}

\usepackage{microtype}
\usepackage{hyperref}
\usepackage{url}

\usepackage{graphicx}
\usepackage{subfig}

\usepackage{booktabs}
\usepackage{multirow}

\usepackage{amsmath}
\usepackage{amsthm}
\usepackage{bm}

\theoremstyle{plain}
\newtheorem{theorem}{Theorem}[section]

\theoremstyle{definition}

\theoremstyle{remark}

\usepackage{algorithm}
\usepackage{algpseudocode}

\usepackage{xcolor}
\definecolor{markgreen}{RGB}{0,120,80}
\definecolor{markred}{RGB}{180,50,50}

\usepackage{amssymb}
\usepackage{pifont}
\newcommand{\cmark}{\textcolor{markgreen}{\ding{51}}}%
\newcommand{\xmark}{\textcolor{markred}{\ding{55}}}%
\newcommand{\nmark}{%
  \ooalign{%
    \hfil\cmark\hfil\cr
    \hfil\raisebox{1ex}{\kern 0.5ex\textcolor{markred}{\rule{1.2ex}{0.25ex}}}\hfil\cr
  }%
}

\makeatletter
\usepackage{tikz}
\usetikzlibrary{calc}
\newcommand*{\circled}[2][]{\tikz[baseline=(C.base)]{
    \node[inner sep=0pt] (C) {\vphantom{1g}#2};
    \node[draw, circle, inner sep=0.5pt, yshift=1pt] 
        at (C.center) {\vphantom{1g}};}}
\makeatother

\newcommand{\Binarizer}{\href{https://scikit-learn.org/stable/modules/generated/sklearn.preprocessing.Binarizer.html}{\textit{Binarizer}}}

\newcommand{\KBinsDiscretizer}{\href{https://scikit-learn.org/stable/modules/generated/sklearn.preprocessing.KBinsDiscretizer.html}{\textit{KBinsDiscretizer}}}

\newcommand{\LabelBinarizer}{\href{https://scikit-learn.org/stable/modules/generated/sklearn.preprocessing.LabelBinarizer.html}{\textit{LabelBinarizer}}}

\newcommand{\LabelEncoder}{\href{https://scikit-learn.org/stable/modules/generated/sklearn.preprocessing.LabelEncoder.html}{\textit{LabelEncoder}}}

\newcommand{\MultiLabelBinarizer}{\href{https://scikit-learn.org/stable/modules/generated/sklearn.preprocessing.MultiLabelBinarizer.html}{\textit{MultiLabelBinarizer}}}

\newcommand{\MaxAbsScaler}{\href{https://scikit-learn.org/stable/modules/generated/sklearn.preprocessing.MaxAbsScaler.html}{\textit{MaxAbsScaler}}}

\newcommand{\MinMaxScaler}{\href{https://scikit-learn.org/stable/modules/generated/sklearn.preprocessing.MinMaxScaler.html}{\textit{MinMaxScaler}}}

\newcommand{\Normalizer}{\href{https://scikit-learn.org/stable/modules/generated/sklearn.preprocessing.Normalizer.html}{\textit{Normalizer}}}

\newcommand{\OneHotEncoder}{\href{https://scikit-learn.org/stable/modules/generated/sklearn.preprocessing.OneHotEncoder.html}{\textit{OneHotEncoder}}}

\newcommand{\OrdinalEncoder}{\href{https://scikit-learn.org/stable/modules/generated/sklearn.preprocessing.OrdinalEncoder.html}{\textit{OrdinalEncoder}}}

\newcommand{\PowerTransformer}{\href{https://scikit-learn.org/stable/modules/generated/sklearn.preprocessing.PowerTransformer.html}{\textit{PowerTransformer}}}

\newcommand{\QuantileTransformer}{\href{https://scikit-learn.org/stable/modules/generated/sklearn.preprocessing.QuantileTransformer.html}{\textit{QuantileTransformer}}}

\newcommand{\RobustScaler}{\href{https://scikit-learn.org/stable/modules/generated/sklearn.preprocessing.RobustScaler.html}{\textit{RobustScaler}}}

\newcommand{\SplineTransformer}{\href{https://scikit-learn.org/stable/modules/generated/sklearn.preprocessing.SplineTransformer.html}{\textit{SplineTransformer}}}

\newcommand{\StandardScaler}{\href{https://scikit-learn.org/stable/modules/generated/sklearn.preprocessing.StandardScaler.html}{\textit{StandardScaler}}}

\newcommand{\TargetEncoder}{\href{https://scikit-learn.org/stable/modules/generated/sklearn.preprocessing.TargetEncoder.html}{\textit{TargetEncoder}}}

\newcommand{\SimpleImputer}{\href{https://scikit-learn.org/stable/modules/generated/sklearn.impute.SimpleImputer.html}{\textit{SimpleImputer}}}

\newcommand{\KNNImputer}{\href{https://scikit-learn.org/stable/modules/generated/sklearn.impute.KNNImputer.html}{\textit{KNNImputer}}}

\newcommand{\IterativeImputer}{\href{https://scikit-learn.org/stable/modules/generated/sklearn.impute.IterativeImputer.html}{\textit{IterativeImputer}}}

\title{FedPS: Federated Preprocessing for structured data \\via aggregated Statistics}

\author{%
\name Xuefeng Xu \email xuefeng.xu@warwick.ac.uk \\
\addr Department of Computer Science \\
University of Warwick
\AND
\name Graham Cormode \email graham.cormode@cs.ox.ac.uk \\
\addr Department of Computer Science \\
University of Oxford
}

\def\month{9}  
\def\year{2026} 
\def\openreview{\url{https://openreview.net/forum?id=MdeXZVjNKu}} 

\begin{document}

\maketitle

\begin{abstract}
Federated Learning (FL) enables multiple parties to collaboratively train machine learning models without sharing raw data. However, before training, data must be preprocessed to address missing values, inconsistent formats, and heterogeneous feature scales. This preprocessing stage is critical for model performance but is largely overlooked in FL research. In practical FL systems, privacy constraints prohibit centralizing raw data, while communication efficiency introduces further challenges for distributed preprocessing.
We introduce FedPS, a framework for federated data preprocessing based on aggregated statistics. FedPS leverages data-sketching techniques to efficiently summarize local datasets while preserving essential statistical information. Building on these summaries, we design federated algorithms for feature scaling, encoding, discretization, and missing-value imputation, and extend preprocessing-related models such as Bayesian Linear Regression to both horizontal and vertical FL settings. FedPS provides flexible, communication-efficient, and consistent preprocessing pipelines for practical FL deployments.
\end{abstract}

\section{Introduction}
\label{sec:intro}
Data preprocessing \citep{Garcia2016} is a vital stage of the machine learning pipeline, transforming raw inputs into clean, structured, and analyzable forms. For structured data, e.g., tabular data, common preprocessing tasks include handling missing values, normalizing feature scales, and encoding categorical variables. Effective preprocessing improves model accuracy, accelerates convergence, and enhances interpretability. Yet despite its importance to model performance, preprocessing remains largely neglected in federated learning \citep{Kairouz2021}, where multiple entities collaboratively train a model on decentralized data.

Most federated learning research focuses on improving training algorithms \citep{McMahan2017, Stich2019, Karimireddy2020, Li2020a, Li2020}, typically assuming that data has already been cleaned and transformed. This assumption hides a significant practical bottleneck: without consistent preprocessing (meaning that all clients apply identical preprocessing parameters derived from global knowledge), even state-of-the-art federated learning algorithms fail to achieve their full potential.
In practical federated systems, preprocessing introduces distinct challenges. Privacy constraints prohibit centralizing raw data for joint preparation, communication efficiency limits the information clients can exchange, and data heterogeneity across clients complicates the design of consistent preprocessing pipelines.

We discuss several possible strategies for preprocessing in FL and highlight their limitations.

\textbf{Option 1: Centralized preprocessing}.
Many simulation-based FL studies preprocess the data centrally before partitioning it among clients. While this ensures consistent preprocessing and strong baselines, it is infeasible in real deployments. Centralized preprocessing requires collecting raw data, which directly violates FL's foundational privacy constraints. Thus, it is suitable only for simulation, not practical for FL.

\textbf{Option 2: No preprocessing}.
One may simply train on raw, unprocessed data, avoiding privacy issues but severely compromising model performance. Real-world data is often incomplete, inconsistent, or heterogeneously scaled, all of which harm convergence and generalization. As shown in Section~\ref{sec:expt}, models trained without preprocessing perform substantially worse than those using properly prepared data.

\textbf{Option 3: Transfer preprocessing}.
Another approach is to reuse preprocessing parameters derived from public datasets or pretrained models \citep{Qi2024}. While this avoids learning from private data, its success hinges on the similarity between public and private datasets, an unrealistic assumption in heterogeneous FL environments. Transfer techniques may handle basic format alignment but fail for distribution-dependent tasks such as imputation or discretization.

\textbf{Option 4: Local preprocessing}.
Each client may preprocess its own data locally, preserving privacy but sacrificing cross-client consistency due to heterogeneous data distributions. In non-IID (independent and identically distributed) settings \citep{Li2022}, local transformations (e.g., normalization) can distort the global data distribution. Figure~\ref{fig:two-party-feature-scaling} illustrates how independent standardization can render previously well-separated classes non-separable. Our experiments (Section~\ref{sec:expt}) show that inconsistent local preprocessing may degrade performance even below using raw data, underscoring the need for coordinated preprocessing.
\begin{figure}[t]
  \centering
  \begin{minipage}{0.43\linewidth}
    \vtop{%
      \centering
      \includegraphics[width=0.49\linewidth]{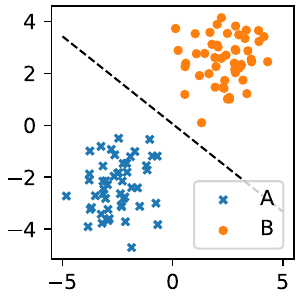}
      \includegraphics[width=0.49\linewidth]{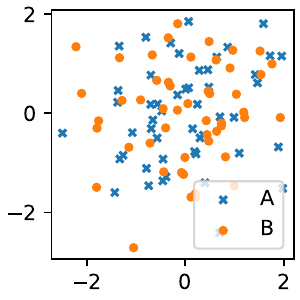}
    }
    \caption{
    Clients A and B hold data with different label distributions. The raw data is linearly separable (left). After each client applies local standardization, the combined data becomes no longer linearly separable (right).}
    \label{fig:two-party-feature-scaling}
  \end{minipage}
  \hfill
  \begin{minipage}{0.55\linewidth}
    \vtop{%
      \centering
      \includegraphics[width=\linewidth]{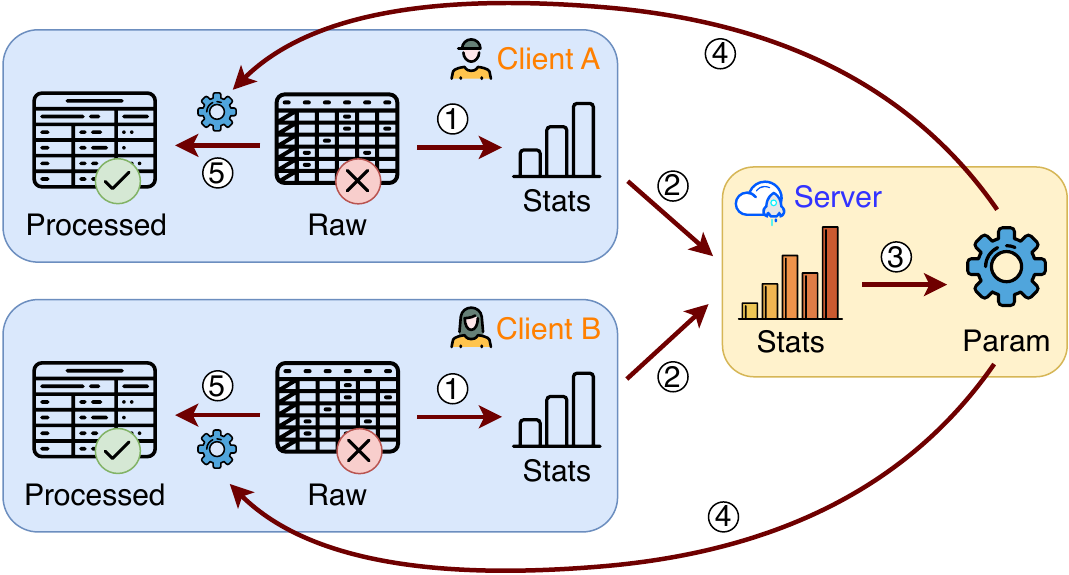}
    }
    \caption{Overview of federated data preprocessing in FedPS.}
    \label{fig:overview}
  \end{minipage}
\end{figure}

\textbf{Option 5: Federated preprocessing}.
Federated preprocessing overcomes the limitations above by coordinating preprocessing without exposing raw data. As illustrated in Figure~\ref{fig:overview}, this approach consists of five steps:
\circled{1} Compute local statistics; 
\circled{2} Share and aggregate statistics; 
\circled{3} Derive preprocessing parameters; 
\circled{4} Broadcast parameters to clients; 
\circled{5} Apply preprocessing locally.
This paradigm ensures consistency, keeps raw data local, and addresses challenges posed by non-IID data distributions.

We present FedPS, a federated preprocessing framework for structured data
and an open-source library\footnote{\url{https://github.com/xuefeng-xu/fedps}} that maps the Scikit-learn preprocessing operations to the federated setting.
FedPS comprises two complementary components: (1) a general workflow for federated preprocessing based on aggregated statistics, illustrated in Figure~\ref{fig:overview}, and (2) a comprehensive suite of preprocessing methods that instantiate this workflow. To demonstrate the framework's flexibility and practical utility, we implement a broad range of preprocessing techniques spanning scaling, encoding, transformation, discretization, and imputation. These methods leverage data-sketching techniques \citep{Cormode2020} to support communication-efficient computation (independent of dataset size) of complex global statistics (e.g., quantiles, frequent items). We further extend core preprocessing-related models, including $k$-Means \citep{Lloyd1982}, $k$-Nearest Neighbors, and Bayesian Linear Regression \citep{Tipping2001}, to both horizontal and vertical FL settings, making the library flexible and applicable across diverse federated data preprocessing scenarios.

Existing federated learning frameworks already support a small subset of preprocessing methods, such as normalization and one-hot encoding \citep{Liu2021,secretflow,Baunsgaard2021,Baunsgaard2022}. Our goal is not to reinvent these individual algorithms. Instead, FedPS provides a framework that systematically supports a wider collection of preprocessing methods by identifying their sufficient statistics and mapping them to reusable federated computation primitives. When suitable federated algorithms already exist (e.g., federated $k$-Means and $k$-Nearest Neighbors), FedPS incorporates them. For preprocessing operations lacking existing federated formulations, such as the Bayesian linear regression used by \IterativeImputer, we derive new federated algorithms.

Our main contributions are summarized as follows:
\begin{itemize}
\item We introduce FedPS, a framework for federated preprocessing that maintains consistency across clients through summarization, aggregation, and parameter distribution (Section~\ref{sec:fedps-framework}).
\item We implement comprehensive preprocessing methods leveraging data-sketching techniques for communication-efficient computation of complex global statistics (Section~\ref{sec:fedps-framework}).
\item We analyze the sufficient statistics and communication costs required by different preprocessing operations, providing practical guidance for scalable deployment in federated settings (Section~\ref{sec:comm}).
\item We develop federated Bayesian linear regression for both horizontal and vertical settings, enabling sophisticated model-based preprocessing while avoiding cross-client feature interactions (Section~\ref{sec:fed-bayes-lr}).
\item Our empirical results across various datasets indicate that the accuracy of federated preprocessing significantly surpasses both local-only preprocessing and raw data baselines, particularly in heterogeneous data contexts, thereby confirming the efficacy of federated preprocessing (Section~\ref{sec:expt}).
\end{itemize}
The rest of the paper is organized as follows. Section~\ref{sec:prelim} reviews techniques foundational to federated preprocessing. Section~\ref{sec:fed-data-prep} presents the FedPS framework. Section~\ref{sec:fed-bayes-lr} develops federated Bayesian linear regression. Section~\ref{sec:expt} reports empirical results, followed by discussion in Section~\ref{sec:discuss} and conclusions in Section~\ref{sec:conclusion}.
\section{Preliminaries}
\label{sec:prelim}
\subsection{Data Preprocessing}
\label{sec:data-prep}
Data preprocessing refers to the collection of techniques used to prepare raw data for downstream analysis or modeling. In tabular data, each column represents a feature, and preprocessing is often applied to individual columns, although operations involving multiple columns are also possible. Common steps include feature scaling, encoding, discretization, imputation of missing values, and other transformations tailored to specific learning tasks. Mature software packages such as \textit{Scikit-learn} \citep{Pedregosa2011} provide standardized and reliable implementations of these techniques.
\subsection{Federated Learning}
\label{sec:fl}
Federated learning is a distributed paradigm where multiple clients collaboratively train a model without sharing their raw data. A central challenge is communication cost, since exchanging large volumes of information can be inefficient. The FedAvg algorithm \citep{McMahan2017} mitigates this by allowing several rounds of local updates before aggregation, thereby reducing the required number of communication rounds. Another difficulty is data heterogeneity. Clients may hold data drawn from different distributions, and these differences can lead to divergent updates when local models are combined by simple averaging. A variety of methods have been proposed to improve stability under heterogeneous data \citep{Li2020a, Li2020}.

Federated learning is typically divided into two settings based on data partitioning. Horizontal FL is when clients share the same feature space but hold different examples. Vertical FL is when clients share a common identifier space but possess different feature spaces. Since most preprocessing methods operate on each feature separately, our main focus is on the horizontal setting, with vertical extensions introduced when required.
\subsection{Statistics Aggregation}
\label{sec:stats-agg}
In federated learning, each client computes local statistics or summaries and transmits them to the server, which then aggregates them into global quantities. Basic statistics such as the minimum, maximum, sum, mean, and variance can be efficiently computed in a distributed manner with minimal communication. Set union can also be performed by merging local sets. More complex quantities, such as quantiles and frequent items, require specialized algorithms to achieve both accuracy and communication efficiency.

\textbf{Quantiles.}
Quantiles divide the dataset into equal-sized partitions. Computing them exactly requires storing the entire dataset \citep{Munro1980}, which is impractical in federated settings due to the associated communication cost. Approximate quantile algorithms based on data sketches are therefore preferred. Examples include the KLL sketch \citep{Karnin2016}, which provides additive error guarantees, and the REQ sketch \citep{Cormode2023}, which provides multiplicative error guarantees. Both are designed to operate efficiently in distributed environments.

\textbf{Frequent Items.}
Identifying the most common items in a dataset requires counting all occurrences, which is expensive when performed exactly. Frequent item sketches \citep{Anderson2017} provide a compact approximation. In our implementation, we use the \textit{DataSketches} library \citep{datasketches}, which offers an effective balance between accuracy and communication cost.
\subsection{Bayesian Linear Regression}
\label{sec:bayes-regression}
Bayesian linear regression (BLR) \citep{Tipping2001,Bishop} is an important tool within \IterativeImputer, used for imputation of missing values.  
It formulates linear regression within a probabilistic framework by placing a prior distribution over the model parameters $\bm{\omega}$. A common choice is an isotropic Gaussian prior with zero mean $p(\bm{\omega}\mid\alpha) = \mathcal{N}(\bm{\omega}\mid\bm{0},\alpha^{-1}\mathbf{I})$, where $\alpha$ denotes the prior precision. Given the data matrix $\mathbf{X}$ and label $\mathbf{Y}$, and assuming Gaussian noise precision $\beta$, the posterior distribution over $\bm{\omega}$ is also Gaussian: $p(\bm{\omega}\mid\mathbf{X},\mathbf{Y},\beta)=\mathcal{N}(\bm{\omega}\mid\bm{\hat{\omega}},\mathbf{\Sigma})$, with posterior mean and covariance given by:
\begin{equation}
\bm{\hat{\omega}} =\beta\mathbf{\Sigma}\mathbf{X}^\top\mathbf{Y},\quad
\mathbf{\Sigma} = (\alpha\mathbf{I}+\beta\mathbf{X}^\top\mathbf{X})^{-1}. \label{eq:omega-cov}
\end{equation}
The hyperparameters $\alpha$ and $\beta$ follow Gamma hyperpriors $p(\alpha) = \mathrm{Gamma}(\alpha\mid a_1,a_2)$, $p(\beta) = \mathrm{Gamma}(\beta\mid b_1,b_2)$, where $\mathrm{Gamma}(\kappa \mid k_1, k_2) = \Gamma(k_1)^{-1} b^{k_1} \kappa^{k_1-1} e^{-k_2 \kappa}$.
Since $\alpha$ and $\beta$ cannot be computed in closed form, they are updated iteratively together with $\bm{\hat{\omega}}$ and $\mathbf{\Sigma}$ as described in Appendix A of \citet{Tipping2001}:
\begin{equation}
\alpha=\frac{\gamma+2a_1}{\|\bm{\hat{\omega}}\|^2_2+2a_2},\quad
\beta=\frac{n-\gamma+2b_1}{\varepsilon+2b_2},\quad
\gamma=\sum_i \frac{\beta\mathbf{\Lambda}_i}{\alpha+\beta\mathbf{\Lambda}_i},\quad
\varepsilon=\|\mathbf{Y} - \mathbf{X}\bm{\hat{\omega}}\|^2_2. \label{eq:blr-alpha-beta}
\end{equation}
Here, $\mathbf{\Lambda}_i$ denotes the $i$-th eigenvalue of $\mathbf{X}^\top\mathbf{X}$. Equivalently, letting $\mathbf{X} = \mathbf{U}\mathbf{S}\mathbf{V}^{\top}$ be the singular value decomposition of $\mathbf{X}$, we have $\mathbf{\Lambda}=\mathbf{S}^2$. Using this decomposition, the inverse step of $\mathbf{\Sigma}$ in Equation~\eqref{eq:omega-cov} can be computed efficiently as $\mathbf{\Sigma} = \mathbf{V}(\alpha\mathbf{I} + \beta\mathbf{\Lambda})^{-1}\mathbf{V}^\top$, which avoids explicitly inverting a large dense matrix.
\section{Federated Data Preprocessing}
\label{sec:fed-data-prep}
In this section, we present FedPS, a framework for federated preprocessing, which aims to estimate global preprocessing parameters from distributed local datasets. 
Different preprocessors require different sufficient statistics, which in turn determine communication costs. These parameters can be computed either exactly or approximately, depending on the communication overhead. 
For some statistics (e.g., means and variances), exact aggregation requires communication independent of local dataset size, yielding parameters identical to centralized computation. However, for other statistics (e.g., quantiles and frequent items), exact computation requires communication that scales with local dataset size, making it impractical. In such cases, sketch-based procedures reduce communication costs while providing explicit approximation guarantees.
We first formulate the problem of federated preprocessing, and then present a general framework for federated preprocessing. We also describe representative preprocessing methods and analyze their communication costs.
\subsection{Problem Formulation}
\label{sec:problem}
Based on the above discussion, we formulate the problem of Federated Data Preprocessing as follows. 
We have a collection of $N$ clients, each of which holds some data $X_i$. 
These clients collaborate with a server $\mathcal{S}$ in order to compute a function of interest, $F$. 
Specifically, we seek to find $F(\mathbf{X}) = F(\cup_{i=1}^{N} X_i)$. 
For many functions $F$, exact computation of $F$ may not be practical, and instead we seek an approximation, $\hat{F}$. 
The nature of the approximation will depend on the nature of $F$ and the approach taken. 
However, a generic goal is to minimize some norm of the deviation between $F$ and $\hat{F}$  for vector or scalar-valued functions, $\| F(\mathbf{X}) - \hat{F}(\mathbf{X})\|_2$, e.g., the Euclidean norm. 
This can be viewed as a restricted instance of the more general machine learning objective of empirical loss minimization \citet{Jung2022}.  
Specific examples are described in the subsequent sections. 

Throughout, we will seek to bound the \textit{communication cost} of the protocol, in terms of both the worst-case amount of data communicated per client, and the number of rounds of interaction between the clients and the server. 
Consistent with other models of federated computation, we focus on protocols that do not require peer-to-peer communication between clients, and instead only allow communications that go between clients and the server.
Rather than provide a fixed communication budget or bound on the allowable number of rounds, we analyze the results achievable for different methods, and note what tradeoffs are achievable. 
Our goal of communication efficiency is protocols whose cost is independent of the size of the data held by each client. 
That is, if client $i$ holds $n_i$ examples, their communication cost should be independent of $n_i$ (or at most very weakly dependent, i.e., $O(\log n_i)$).  
This may be achieved by adopting compact data summaries that can be combined by the server, such as sketch-based approximations. 
This means that the total amount of communication to the server scales linearly with the number of clients.

Our \textit{privacy model} seeks to minimize the information that is shared between clients and the server (and thus is aligned with the objective to limit the communication cost). 
Thus trivial solutions which centralize all of the client data at the server are prohibited. 
Instead, we seek solutions that adopt the principle of \textit{data minimization}, and send messages that constitute compact summaries and aggregated statistics rather than raw data. 
Note that this does not absolutely guarantee that there is no leakage of information about the original data, and we do not make any claims in this regard. 
FedPS should not be described as cryptographically secure or differentially private unless it is combined with additional safeguards.
We discuss several possible approaches to formalizing privacy guarantees in Section~\ref{sec:discuss}.
\subsection{The Framework}
\label{sec:fedps-framework}
\begin{table*}[t]
\caption{Preprocessors and associated statistics.}
\label{tab:preprocessing-statistics}
\centering
\begin{tabular}{cccc}
	\toprule
	Categories & Preprocessors & Formulation &  Associated Statistics \\
	\midrule
	\multirow{5}{*}{Scaling}
	& \MaxAbsScaler & $x/|x|_{\max}$ & Max \\
	& \MinMaxScaler  & $(x-x_{\min})/(x_{\max}-x_{\min})$ & Min, Max \\
	& \StandardScaler & $(x-\mu)/\sigma$ & Mean, Variance \\
	& \RobustScaler & $(x-Q_2)/(Q_3-Q_1)$ & \textit{Quantile} \\
	& \Normalizer & $x/\|x\|$ & Sum, Max \\
	\midrule
	\multirow{6}{*}{Encoding}
	& \LabelBinarizer & one-hot($y$) & Set Union \\
	& \MultiLabelBinarizer & multi-hot($y$) & Set Union \\
	& \LabelEncoder & ordinal($y$) & Set Union \\
	& \OneHotEncoder & one-hot($x$) & Set Union, \textit{Frequent items} \\
	& \OrdinalEncoder & ordinal($x$) & Set Union, \textit{Frequent items} \\
	& \TargetEncoder & $\lambda(n_i)\frac{n_{iY}}{n_i}+(1-\lambda(n_i))\frac{n_Y}{n}$ & Set Union, Mean, Variance \\
	\midrule
	\multirow{3}{*}{Transformation}
	& \PowerTransformer & $\psi(\lambda,x)$ & Sum, Mean, Variance \\
	& \QuantileTransformer & CDF($x$), $\Phi^{-1}$(CDF($x$)) & Quantile \\
	& \SplineTransformer & B-spline($x$) & Min, Max, \textit{Quantile} \\
	\midrule
	\multirow{2}{*}{Discretization}
	& \Binarizer & 1 if $x>T$ else 0 & -- \\
	& \KBinsDiscretizer & $j$ if $T_j\le x<T_{j+1}$ & Min, Max, Quantile, Mean \\
	\midrule
	\multirow{3}{*}{Imputation}
	& \SimpleImputer & mean($x$), median($x$), freq($x$) & Mean, \textit{Quantile}, \textit{Freq-items} \\
	& \KNNImputer & mean($k$-NN of $x$)& HFL: Min, Mean; VFL: Sum \\
	& \IterativeImputer & RegressionModel($x$) & Sum \\
	\bottomrule
\end{tabular}
\end{table*}
Federated preprocessing follows the workflow in Figure~\ref{fig:overview}: clients compute local statistics (Step \circled{1}), which are aggregated at the server (Step \circled{2}). The server derives preprocessing parameters (Step \circled{3}), broadcasts them to clients (Step \circled{4}), who apply them locally (Step \circled{5}). We outline representative preprocessors.

\textbf{Why these preprocessors?}
We select these methods to span a broad range of statistical complexity and practical relevance in federated learning. These are core techniques implemented in standard machine learning libraries, particularly \textit{Scikit-learn}, which makes them both familiar to practitioners and representative of real-world preprocessing pipelines.
\StandardScaler\ represents perhaps the most commonly used normalization technique and relies only on first- and second-order moments.
\KBinsDiscretizer\ captures discretization methods based on ranges, quantiles, and clustering, allowing us to illustrate the use of quantile sketches and federated $k$-Means.
\KNNImputer\ and \IterativeImputer\ represent substantially more complex imputers that operate in both horizontal and vertical settings and rely on nontrivial federated primitives such as $k$-NN regression and Bayesian linear regression.
Together, these methods demonstrate how FedPS supports preprocessing from simple aggregations to iterative, model-based procedures, while maintaining compatibility with widely used preprocessing libraries.

\StandardScaler\ computes a global mean and variance. Each client reports three quantities: the sum of squared values $s=\sum_i x^2_i$, the sum of values $c=\sum_i x_i$, and the number of samples $n$ (Step \circled{1}). The server aggregates these statistics by summation to obtain $(S,C,N)$ (Step \circled{2}). After aggregation, the global mean is $\mu=C/N$ and the variance is $\sigma^2=S/N-\mu^2$ (Step \circled{3}). This requires one data pass and one communication round. The server then sends $\mu$ and $\sigma$ to clients (Step \circled{4}), who apply the transformation $(x-\mu)/\sigma$ (Step \circled{5}).

\KBinsDiscretizer\ partitions continuous features into discrete bins using one of three strategies: uniform, quantile-based, or clustering-based.
For uniform binning, clients compute local min/max values, which are aggregated at the server to obtain global bounds. The server sends these values to clients, who derive equal-width bin edges and use them for discretization.
For quantile-based binning, clients construct local quantile sketches (Step \circled{1}), which the server merges to form a global sketch (Step \circled{2}). The resulting quantile-based bin edges are computed centrally (Step \circled{3}) and sent back to clients for discretization (Steps~\circled{4} and \circled{5}).
For clustering-based binning, we employ federated $k$-Means (Appendix~\ref{sec:fed-kmeans}). Clients iteratively contribute sufficient statistics of cluster assignments, while the server updates centroids by computing the global mean of points in each cluster. Final bin boundaries derived from the centroids are shared with clients.

\KNNImputer\ fills missing values using the mean of the $k$ nearest neighbors, based on a distance function that accounts for missing coordinates and normalizes by the number of valid comparisons \citep{Dixon1979}. We implement this via federated $k$-Nearest Neighbors Regression (Appendix~\ref{sec:fed-knn}).
In the horizontal setting, each client computes distances between its local samples and query samples and reports its smallest $k$ distances to the server. The server aggregates these candidates to identify the global nearest neighbors and requests the corresponding feature values for imputation.
In the vertical setting, distance computation is distributed across clients according to their feature subsets. Each client contributes partial distances, which the server aggregates and normalizes to determine the nearest neighbors used for imputation.

\IterativeImputer\ treats each feature with missing values as a regression problem, predicting missing entries from all other features in an iterative, round-robin manner \citep{Buck1960}. This procedure resembles a simplified form of MICE \citep{Buuren2011}.
The regression model is implemented using Federated Bayesian Linear Regression (Section~\ref{sec:fed-bayes-lr}). In each iteration, clients contribute the necessary second-order statistics, such as $\mathbf{X}^{\top}\mathbf{X}$ in the horizontal setting or $\mathbf{X}\mathbf{X}^{\top}$ in the vertical setting. The server aggregates these statistics to compute global regression parameters, which are then used to update missing values before proceeding to the next iteration.

Other preprocessing methods, including \MinMaxScaler, \OrdinalEncoder, \PowerTransformer, and others, follow the same principle: identify the sufficient statistics, aggregate them across clients, compute global parameters at the server, and broadcast them back. For clarity, we group methods into five categories: scaling, encoding, transformation, discretization, and imputation. Table~\ref{tab:preprocessing-statistics} summarizes the sufficient statistics for each method, with further details in Appendix~\ref{sec:fed-preprocessor}. 
\textit{Italics statistics} denote sketch-based approximations.

\textbf{Data heterogeneity} is a common challenge in federated learning. 
Fortunately, the aggregated statistics we need are insensitive to distribution shifts among clients. Simple statistics such as minimum, maximum, sums, means, and variances are exact after aggregation. Quantile and frequent-item sketches maintain theoretical guarantees regardless of local distribution differences, making them well suited to heterogeneous environments.
\subsection{Communication Overhead Analysis}
\label{sec:comm}
Communication overhead is a critical element of federated learning. To assess the communication efficiency of different preprocessing methods, we analyze the statistics required by each method together with the resulting number of communication rounds and per client communication cost. Some preprocessors depend on a single aggregated statistic, while others require multiple statistics or iterative protocols. The total overhead is therefore determined by the combination of required statistics and their aggregation frequency.

We use the following parameters:
\begin{itemize}
	\item $n$, $m$: number of samples and number of features in the training set.
	\item $n'$: number of samples in the test set.
	\item $t$: number of iterations in $k$-Means, power transform, and iterative imputation.
	\item $k$: method-dependent parameter such as number of clusters in $k$-Means, number of neighbors in $k$-Nearest Neighbors, or maximum map size in frequent-item sketch.
	\item $d$: number of distinct categories (for encoding tasks).
\end{itemize}
Table~\ref{tab:statistics-preprocessing} reorganizes preprocessing methods by the type of aggregated statistic they require, rather than by functionality. This view complements Table~\ref{tab:preprocessing-statistics}, which identifies sufficient statistics for each method.
Here, our goal is to make explicit how different statistical primitives translate into communication rounds and asymptotic costs per client under horizontal and vertical data partitioning. When a preprocessor relies on multiple statistics, its communication cost is the sum of the corresponding components.
\begin{table}[t]
\caption{Aggregated statistics and associated preprocessors.}
\label{tab:statistics-preprocessing}
\centering
\begin{tabular}{ccccc}
	\toprule
	Statistics  & Associated Preprocessors & Partitioning & Comm. Round & Comm. Cost (Client) \\
	\midrule
	\multirow{6}{*}{Min / Max}
	& \MaxAbsScaler & Horizontal & 1 & $O(m)$\\
	& \MinMaxScaler & Horizontal & 1 & $O(m)$\\
	& \Normalizer\ (max norm) & Vertical & 1 & $O(n)$\\
	& \KBinsDiscretizer\ (uniform) & Horizontal & 1 & $O(m)$ \\
	& \SplineTransformer\ (uniform) & Horizontal & 1 & $O(m)$ \\
	& \KNNImputer & Horizontal & 1 & $O(n'km)$ \\
	\midrule
	\multirow{5}{*}{Sum}
	& \Normalizer\ ($l_1$ or $l_2$ norm) & Vertical & 1 & $O(n)$ \\
	& \PowerTransformer & Horizontal & 1 & $O(m)$ \\
	& \KNNImputer & Vertical & 1 & $O(n'n)$ \\
	& \IterativeImputer & Horizontal & $\bm{t}$ & $O(tm^2\min(n,m))$ \\
	& \IterativeImputer & Vertical & $\bm{t}$ & $O(tmn(\min(n,m)+t))$ \\
	\midrule
	\multirow{6}{*}{Mean}
	& \StandardScaler\ ($\mu=0$) & Horizontal & 1 & $O(m)$ \\
	& \SimpleImputer\ (mean) & Horizontal & 1 & $O(m)$ \\
	& \TargetEncoder & Horizontal & 1 & $O(dm)$ \\
	& \PowerTransformer\ ($\mu=0$) & Horizontal & 1 & $O(m)$ \\
	& \KBinsDiscretizer\ (kmeans) & Horizontal & $\bm{t}$ & $O(tkm)$ \\
	& \KNNImputer & Horizontal & 1 & $O(n'km)$ \\
	\midrule
	\multirow{3}{*}{Variance}
	& \StandardScaler\ ($\sigma=1$) & Horizontal & 1 & $O(m)$ \\
	& \TargetEncoder & Horizontal & 1 & $O(dm)$ \\
	& \PowerTransformer & Horizontal & $\bm{t}$ & $O(tm)$ \\
	\midrule
	\multirow{5}{*}{\textit{Quantiles}}
	& \RobustScaler & Horizontal & 1 & \multirow{5}{*}{$O(\frac{1}{\epsilon}\log^2\log\frac{1}{\delta}\cdot m)$} \\
	& \KBinsDiscretizer\ (quantile) & Horizontal & 1 & \\
	& \QuantileTransformer & Horizontal & 1 & \\
	& \SplineTransformer\ (quantile) & Horizontal & 1 & \\
	& \SimpleImputer\ (median) & Horizontal & 1 & \\
	\midrule
	\multirow{6}{*}{Set Union}
	& \LabelBinarizer & Horizontal & 1 & \multirow{3}{*}{$O(d)$} \\
	& \MultiLabelBinarizer & Horizontal & 1 & \\
	& \LabelEncoder & Horizontal & 1 & \\
	\cline{2-5}
	& \OneHotEncoder & Horizontal & 1 & \multirow{3}{*}{$O(dm)$} \\
	& \OrdinalEncoder & Horizontal & 1 & \\
	& \TargetEncoder & Horizontal & 1 & \\
	\midrule
	\multirow{3}{*}{\textit{Freq-items}}
	& \OneHotEncoder\ (ignore infreq.) & Horizontal & 1 & \multirow{3}{*}{$O(k\cdot m)$} \\
	& \OrdinalEncoder\ (ignore infreq.) & Horizontal & 1 & \\
	& \SimpleImputer\ (most-frequent) & Horizontal & 1 & \\
	\bottomrule
\end{tabular}
\end{table}

Table~\ref{tab:statistics-preprocessing} summarizes the communication cost of aggregating each class of statistics.
Simple statistics such as minimum, maximum, sum, mean, and variance require a constant amount of communication per feature, resulting in $O(m)$ cost per client. \Normalizer\ needs sums or maxima per row, which gives $O(n)$ total cost. 
More complex methods incur higher overhead due to sketch-based summaries, iterative procedures, or pairwise sample interactions, as exemplified by quantile estimation, clustering-based discretization, and $k$-nearest neighbor imputation.
Overall, the table highlights how FedPS supports a wide range of preprocessing methods while making their communication requirements explicit and comparable.

Encoding methods based on set union depend on the number of distinct categories $d$, while \TargetEncoder\ requires mean and variance per category, giving $O(d)$ communication cost. 
Frequent-item sketches incur a cost of $O(k)$ per feature, where $k$ is the maximum map size. The true frequency of an item lies between the Upper Bound (UB) and Lower Bound (LB) computed for that item, with
$(\text{UB}-\text{LB}) \le n\epsilon$, where $n$ denotes the sum of all item counts, and $\epsilon = 3.5/k$. 
The implementation of 
quantile-based methods relies on KLL sketches \citep{Karnin2016}, whose communication cost is governed by sketch size under error parameters $\epsilon$ and $\delta$. By default, the configuration targets $\epsilon \approx 1.65\%$. 
REQ sketches \citep{Cormode2023} can also be used when relative-error guarantees are needed. By default, this setting roughly corresponds to $\epsilon \approx 1\%$.

For \KBinsDiscretizer\ with $k$-means-based binning, the cost is $O(t k m)$. Quantile-based binning inherits the KLL sketch cost, while uniform binning requires only $O(m)$ communication since only min and max statistics are needed.
For \KNNImputer, the cost depends on the size of the test set $n'$.  
In the horizontal setting, each client sends $O(n' k)$ distances per feature, and once the server identifies the nearest neighbors, the relevant clients send the associated $O(n' k)$ values in the worst case. This results in a cost of $O(n' k m)$ per client.  
In the vertical setting, each client computes partial distances from all training samples to all test samples, yielding a cost of $O(n' n)$.

For \IterativeImputer, the algorithm iterates over all features with missing values (worst case $m$) for $t$ rounds. In each round, one feature is treated as the target and regressed on all of the other features using Federated Bayesian Linear Regression. The communication cost per iteration depends on the sufficient statistics aggregated by BLR. For the horizontal setting, BLR costs $O(m\min(n,m))$ (Theorem~\ref{thm:comm-HBLR}), and for the vertical setting, it costs $O(n\min(n,m)+nt)$ (Theorem~\ref{thm:comm-VBLR}). Since \IterativeImputer\ performs $t$ iterations over $m$ features, the total communication cost is $O(tm^2\min(n,m))$ for the horizontal setting and $O(tmn(\min(n,m)+t))$ for the vertical setting.
\section{Federated Bayesian Linear Regression}
\label{sec:fed-bayes-lr}
In this section we present a focused study of a specific preprocessing technique as an illustration of applying the FedPS framework to a more complex task.
Bayesian Linear Regression (BLR) (Section~\ref{sec:bayes-regression}) is a foundational technique used within \IterativeImputer\ to model conditional feature distributions. 
Unlike simpler preprocessing models such as $k$-Means, which uses only cluster sum and count, or $k$-Nearest Neighbors, which uses distances, BLR maintains a full posterior distribution over model parameters and iteratively refines hyperparameters $\alpha$ and $\beta$. This iterative refinement is essential for accurate imputation but introduces complexity absent in one-shot aggregation methods. 

In federated settings, the key challenge is computing sufficient statistics for parameter updates without exposing raw data, while accommodating the distinct computational patterns of horizontal versus vertical data partitioning. We develop federated BLR for both settings to illustrate how FedPS extends beyond simple aggregations to support sophisticated model-based preprocessing with multiple communication rounds.
The horizontal variant follows standard sufficient-statistic aggregation and serves as a baseline.
The vertical variant introduces an algebraic reformulation that enables exact Bayesian inference without cross-client feature interactions.
Both variants iteratively update model parameters until convergence.
\subsection{Horizontal Federated BLR}
\label{sec:h-fed-blr}
\begin{algorithm}[ht]
\caption{Horizontal Federated BLR (Server)}
\label{alg:h-fed-blr}
\begin{algorithmic}[1]
\State {\bfseries Input:} Client $c$ holds $\mathbf{X}^{(c)}$ and $\mathbf{Y}^{(c)}$
\State Initialize $\alpha$ and $\beta$
\State \colorbox{cyan!20}{Aggregate $\mathbf{X}^\top\mathbf{Y}=\sum_c {\mathbf{X}^{(c)}}^\top\mathbf{Y}^{(c)}$} \label{alg:h-fed-blr-xy}
\State \colorbox{cyan!20}{Aggregate $\mathbf{X}^\top\mathbf{X}=\sum_c {\mathbf{X}^{(c)}}^\top\mathbf{X}^{(c)}$} \label{alg:h-fed-blr-xx}
\State Compute eigenvalues $\mathbf{\Lambda}$ and eigenvectors $\mathbf{V}$ of $\mathbf{X}^\top\mathbf{X}$ \label{alg:h-fed-blr-eig}
\Repeat
\State Compute $\mathbf{\Sigma}=\mathbf{V}(\alpha\mathbf{I}+\beta\mathbf{\Lambda})^{-1}\mathbf{V}^\top$ \label{alg:h-fed-blr-inv}
\State Compute $\bm{\hat{\omega}}=\beta\mathbf{\Sigma}\mathbf{X}^\top\mathbf{Y}$ \label{alg:h-fed-blr-omega}
\State \colorbox{orange!20}{Broadcast $\bm{\hat{\omega}}$ to all clients} \label{alg:h-fed-blr-broadcast}
\State \colorbox{cyan!20}{Aggregate global error $\varepsilon=\sum_c\|\mathbf{Y}^{(c)} - \mathbf{X}^{(c)}\bm{\hat{\omega}}\|^2_2$} \label{alg:h-fed-blr-error}
\State Update $\alpha$ and $\beta$ using Equation~\eqref{eq:blr-alpha-beta}
\Until{Convergence or maximum number of iterations reached}
\State {\bfseries Output:} {model parameter $\bm{\hat{\omega}}$}
\end{algorithmic}
\end{algorithm}
In the horizontal setting, samples are partitioned across clients, and each client holds all features for its local data.
BLR depends on the sufficient statistics $\mathbf{X}^\top\mathbf{Y}$ and $\mathbf{X}^\top\mathbf{X}$, both of which decompose additively across clients.
Each client computes its local contributions ${\mathbf{X}^{(c)}}^\top\mathbf{Y}^{(c)}$ and ${\mathbf{X}^{(c)}}^\top\mathbf{X}^{(c)}$, which are aggregated by summation at the server (Algorithm~\ref{alg:h-fed-blr}, steps~\ref{alg:h-fed-blr-xy}-\ref{alg:h-fed-blr-xx}).
For example, $\mathbf{X}^\top\mathbf{X}$ is computed as ($\mathbf{X}$ are rows stacked):
\begin{equation}
\mathbf{X}^\top\mathbf{X}
=\begin{bmatrix}
	{\mathbf{X}^{(1)}}^\top & {\mathbf{X}^{(2)}}^\top & \ldots
\end{bmatrix}
\begin{bmatrix}
	\mathbf{X}^{(1)} \\
	\mathbf{X}^{(2)} \\
	\vdots \\
\end{bmatrix}
= \sum_c {\mathbf{X}^{(c)}}^\top \mathbf{X}^{(c)}
\end{equation}
This procedure is equivalent to centralized BLR and avoids iterative FedAvg-style averaging \citep{McMahan2017}, since the sufficient statistics are exact after aggregation.
We include this formulation for completeness and as a reference point for comparison with the vertical setting.

To avoid repeatedly inverting the matrix to get $\mathbf{\Sigma}$ in each iteration, the server performs an eigenvalue decomposition of $\mathbf{X}^\top\mathbf{X}$ once: $\mathbf{X}^\top\mathbf{X}=\mathbf{V}\mathbf{\Lambda}\mathbf{V}^\top$ (step \ref{alg:h-fed-blr-eig}). This enables efficient updates of $\mathbf{\Sigma}$ in each subsequent iteration (step \ref{alg:h-fed-blr-inv}). The server then computes $\bm{\hat{\omega}}$ and broadcasts it to clients (step \ref{alg:h-fed-blr-omega}-\ref{alg:h-fed-blr-broadcast}).
The computation cost of this algorithm for the server is dominated by the $O(m^3)$ decomposition cost (as a function of the number of features $m$).
In practice, the computation time for this decomposition step is quite low. For example, when $m=1000$, the decomposition only takes approximately 80 ms.
Clients compute their local reconstruction error contributions, which are summed at the server to obtain the global error $\varepsilon$ (step \ref{alg:h-fed-blr-error}), followed by updates to $\alpha$ and $\beta$ using Equation~\eqref{eq:blr-alpha-beta}.

The full procedure is outlined in Algorithm~\ref{alg:h-fed-blr}, with blue highlighted steps indicating communication from clients to server, and orange highlighted steps indicating communication from server to clients. In each iteration, clients communicate only scalar errors, making the iterative refinement phase highly communication-efficient (i.e., each client only has to send a single scalar). The dominant communication cost comes from the initial aggregation of sufficient statistics.
\begin{theorem}
\label{thm:comm-HBLR}
The communication cost per client of Horizontal Federated BLR is $O(m\min(n,m))$.
\end{theorem}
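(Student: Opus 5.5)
The plan is to account for every message a single client $c$ sends or receives in Algorithm~\ref{alg:h-fed-blr}. We then bound the one dominant term by $O(m\min(n,m))$, using the fact that a client holding few rows can send a compressed substitute for its Gram matrix. Here $n$ is read as the number of local samples $n_c\le n$, which only strengthens the bound.

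\textbf{Step 1: cheap messages.} Sending ${\mathbf{X}^{(c)}}^\top\mathbf{Y}^{(c)}$ (step~\ref{alg:h-fed-blr-xy}) costs $O(m)$. In each refinement iteration the client receives $\bm{\hat{\omega}}$, which costs $O(m)$, and returns one scalar error, which costs $O(1)$. The main count (following the paper's convention) treats the refinement phase as lower order and omits its iteration dependence. To be safe, a refined count would charge it per iteration as $O(tm)$; this is dominated by the Gram term whenever $t\le \min(n,m)$.

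\textbf{Step 2: the Gram matrix, the main step.} Naively sending ${\mathbf{X}^{(c)}}^\top\mathbf{X}^{(c)}$ costs $m^2$ entries. When $m\le n_c$ this is $O(m\cdot m)=O(m\min(n,m))$ and we are done. When $n_c<m$, the matrix has rank at most $n_c$, so the client should not send the dense $m\times m$ matrix. It can send either of two cheaper objects:
\begin{itemize}
\item the raw-free factor $\mathbf{S}^{(c)}{\mathbf{V}^{(c)}}^\top$ from a thin SVD $\mathbf{X}^{(c)}=\mathbf{U}^{(c)}\mathbf{S}^{(c)}{\mathbf{V}^{(c)}}^\top$, which has size $n_c\times m$;
\item equivalently, the eigenpairs of the Gram matrix: $n_c$ eigenvectors of length $m$ together with $n_c$ eigenvalues.
\end{itemize}
Either costs $O(n_c m)$. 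The server reconstructs ${\mathbf{X}^{(c)}}^\top\mathbf{X}^{(c)}={\mathbf{V}^{(c)}}{\mathbf{S}^{(c)}}^2{\mathbf{V}^{(c)}}^\top$ exactly, so the aggregated $\mathbf{X}^\top\mathbf{X}$ and all later steps are unchanged. Combining the two cases, each client chooses the cheaper representation, which gives $O(m\min(n_c,m))\subseteq O(m\min(n,m))$.

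\textbf{Step 3: conclude.} Summing the contributions gives $O(m)+O(m\min(n,m))+O(m)$ per refinement round, which is $O(m\min(n,m))$.

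The main obstacle is Step 2. First, one must justify that the low-rank factorization is a legitimate, exact substitute for the sufficient statistic, which follows from the rank argument above. Second, one must handle the iteration-dependent refinement cost. The paper's own framing, that per-iteration traffic is only a scalar plus a broadcast, suggests the theorem intends to count the dominant client-to-server statistic, so I would state that convention explicitly.
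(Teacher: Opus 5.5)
Your proposal is correct and follows essentially the same route as the paper: $O(m)$ for ${\mathbf{X}^{(c)}}^\top\mathbf{Y}^{(c)}$, the Gram matrix compressed from $O(m^2)$ to $O(m\min(n,m))$ by a local eigendecomposition, and the refinement phase treated as negligible. Your thin-SVD factor $\mathbf{S}^{(c)}{\mathbf{V}^{(c)}}^\top$ is the same compression with the exact reconstruction spelled out. You also observe that the paper's count omits the per-iteration $O(m)$ downlink of $\bm{\hat{\omega}}$, so the bound strictly reads $O(m\min(n,m)+tm)$ unless $t=O(\min(n,m))$ or only uplink is counted; this is a fair sharpening of the paper's accounting, not a gap in your argument.
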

\begin{proof}
Note $n$ here is the number of samples per client, not the total number of samples across all clients.
Since $\mathbf{X}^{(c)}$ has size $n \times m$, in step \ref{alg:h-fed-blr-xy} clients communicate ${\mathbf{X}^{(c)}}^\top\mathbf{Y}^{(c)}$ of size $O(m)$. In step \ref{alg:h-fed-blr-xx}, the aggregate ${\mathbf{X}^{(c)}}^\top\mathbf{X}^{(c)}$ is $O(m^2)$, but can be reduced to $O(m\min(n,m))$ via local eigenvalue decomposition. During iterative refinement, clients communicate only scalar errors at step \ref{alg:h-fed-blr-error}, which is negligible. Thus, the dominant cost is the initial aggregation of ${\mathbf{X}^{(c)}}^\top\mathbf{X}^{(c)}$, yielding $O(m\min(n,m))$ per-client communication.
\end{proof}
\subsection{Vertical Federated BLR}
\label{sec:v-fed-blr}
\begin{algorithm}[t]
\caption{Vertical Federated BLR (Server)}
\label{alg:v-fed-blr}
\begin{algorithmic}[1]
\State {\bfseries Input:} Client $c$ holds feature block $\mathbf{X}^{(c)}$; one client holds $\mathbf{Y}$
\State Initialize $\alpha$ and $\beta$
\State \colorbox{cyan!20}{Receive $\mathbf{Y}$ from the client holding the target} \label{alg:v-fed-blr-y} 
\State \colorbox{cyan!20}{Aggregate $\mathbf{X}\mathbf{X}^\top=\sum_c \mathbf{X}^{(c)}{\mathbf{X}^{(c)}}^\top$} \label{alg:v-fed-blr-xx}
\State Compute eigenvalues $\mathbf{\Lambda}$ and eigenvectors $\mathbf{U}$ of $\mathbf{X}\mathbf{X}^\top$ \label{alg:v-fed-blr-eig}
\Repeat
\State Compute $\mathbf{\check{\Sigma}}=\mathbf{U}(\alpha\mathbf{I}+\beta\mathbf{\Lambda})^{-1}\mathbf{U}^\top$ \label{alg:v-fed-blr-inv}
\State \colorbox{orange!20}{Broadcast $\beta\mathbf{\check{\Sigma}}\mathbf{Y}$ to all clients} \label{alg:v-fed-blr-bsy}
\State Each client computes $\bm{\hat{\omega}}^{(c)}={\mathbf{X}^{(c)}}^\top\beta\mathbf{\check{\Sigma}}\mathbf{Y}$ \label{alg:v-fed-blr-omega}
\State \colorbox{cyan!20}{Aggregate $\mathbf{\hat{Y}}=\sum_c \mathbf{X}^{(c)}\bm{\hat{\omega}}^{(c)}$} \label{alg:v-fed-blr-yhat}
\State Compute error $\varepsilon=\|\mathbf{Y} - \mathbf{\hat{Y}}\|^2_2$ \label{alg:v-fed-blr-error}
\State \colorbox{cyan!20}{Aggregate $\|\bm{\hat{\omega}}\|^2_2=\sum_c {\|\bm{\hat{\omega}}^{(c)}\|}^2_2$} \label{alg:v-fed-blr-norm}
\State Update $\alpha$ and $\beta$ using Equation~\eqref{eq:blr-alpha-beta}
\Until{Convergence or maximum number of iterations reached}
\State {\bfseries Output:} Distributed coefficient blocks $\{\bm{\hat{\omega}}^{(c)}\}_c$ (held by their respective clients)
\end{algorithmic}
\end{algorithm}
In the vertical setting, features are partitioned across clients while samples are aligned.
Standard BLR requires $\mathbf{X}^\top\mathbf{X}$, whose off-diagonal blocks ${\mathbf{X}^{(j)}}^\top\mathbf{X}^{(k)}$ ($j \neq k$) involve cross-client feature interactions that cannot be computed without sharing raw data.
\begin{equation}
\mathbf{X}^\top\mathbf{X}=
\begin{bmatrix}
	{\mathbf{X}^{(1)}}^\top \\
	{\mathbf{X}^{(2)}}^\top \\
	\vdots \\
\end{bmatrix}
\begin{bmatrix}
	\mathbf{X}^{(1)} & \mathbf{X}^{(2)} & \ldots
\end{bmatrix}
=
\begin{bmatrix}
	{\mathbf{X}^{(1)}}^\top{\mathbf{X}^{(1)}} & {\mathbf{X}^{(1)}}^\top{\mathbf{X}^{(2)}} & \ldots \\
	{\mathbf{X}^{(2)}}^\top{\mathbf{X}^{(1)}} & {\mathbf{X}^{(2)}}^\top{\mathbf{X}^{(2)}} & \ldots \\
	\vdots & & \\
\end{bmatrix}.
\end{equation}
To avoid this limitation, we reformulate the posterior mean computation using an equivalent expression based on $\mathbf{X}\mathbf{X}^\top$ instead of $\mathbf{X}^\top\mathbf{X}$, thereby avoiding cross-client terms:
\begin{equation}
\bm{\hat{\omega}}=\beta\mathbf{X}^\top\mathbf{\check{\Sigma}}\mathbf{Y}, \quad
\mathbf{\check{\Sigma}}=(\alpha\mathbf{I} +\beta\mathbf{X}\mathbf{X}^\top)^{-1}. \label{eq:v-omega-cov}
\end{equation}
Although $\mathbf{\check{\Sigma}}$ is not the posterior covariance, this reformulation yields mathematically exact the same posterior mean as the standard BLR solution.
Crucially, $\mathbf{X}\mathbf{X}^\top$ decomposes additively across clients as $\sum_c \mathbf{X}^{(c)}{\mathbf{X}^{(c)}}^\top$.
Note that if we did want to output the posterior covariance, we would still need to compute $\mathbf{X}^\top\mathbf{X}$. However, since the preprocessing procedure of \IterativeImputer\ uses only the posterior mean for imputation, this already fully satisfies our needs.
\begin{theorem}
\label{thm:same-omega}
The posterior mean $\bm{\hat{\omega}}$ computed by the standard BLR formulation in Equation~\eqref{eq:omega-cov} is identical to that computed by the reformulated expression in Equation~\eqref{eq:v-omega-cov}.
\end{theorem}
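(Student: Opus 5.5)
The claim is a push-through identity: for $\alpha>0$ and $\beta\ge 0$ we want
\[
(\alpha\mathbf{I}_m+\beta\mathbf{X}^\top\mathbf{X})^{-1}\mathbf{X}^\top=\mathbf{X}^\top(\alpha\mathbf{I}_n+\beta\mathbf{X}\mathbf{X}^\top)^{-1}.
\]
Multiplying both sides by $\beta$ on the left and by $\mathbf{Y}$ on the right then turns $\beta\mathbf{\Sigma}\mathbf{X}^\top\mathbf{Y}$ from Equation~\eqref{eq:omega-cov} into $\beta\mathbf{X}^\top\mathbf{\check{\Sigma}}\mathbf{Y}$ from Equation~\eqref{eq:v-omega-cov}, which is exactly the statement.

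\textbf{Invertibility.} First I check that both inverses exist. The matrices $\mathbf{X}^\top\mathbf{X}$ and $\mathbf{X}\mathbf{X}^\top$ are symmetric positive semidefinite, so for $\alpha>0$ and $\beta\ge0$ the matrices $\alpha\mathbf{I}+\beta\mathbf{X}^\top\mathbf{X}$ and $\alpha\mathbf{I}+\beta\mathbf{X}\mathbf{X}^\top$ have all eigenvalues at least $\alpha>0$. The conditions $\alpha>0$, $\beta\ge0$ hold because $\alpha$ is a prior precision and $\beta$ a noise precision, and the updates in Equation~\eqref{eq:blr-alpha-beta} keep them positive when the Gamma hyperparameters are nonnegative. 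I expect this to be the only point needing care: it is where the hypotheses on $\alpha$ and $\beta$ are used. The case $\alpha=0$ is excluded, since then $\mathbf{X}^\top\mathbf{X}$ may be singular.

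\textbf{Algebraic identity.} Next I verify
\[
\mathbf{X}^\top(\alpha\mathbf{I}_n+\beta\mathbf{X}\mathbf{X}^\top)=\alpha\mathbf{X}^\top+\beta\mathbf{X}^\top\mathbf{X}\mathbf{X}^\top=(\alpha\mathbf{I}_m+\beta\mathbf{X}^\top\mathbf{X})\mathbf{X}^\top.
\]
I then multiply on the left by $(\alpha\mathbf{I}_m+\beta\mathbf{X}^\top\mathbf{X})^{-1}$ and on the right by $(\alpha\mathbf{I}_n+\beta\mathbf{X}\mathbf{X}^\top)^{-1}$, which gives the displayed push-through identity.

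\textbf{Cross-check via SVD.} As a sanity check I would use the SVD $\mathbf{X}=\mathbf{U}\mathbf{S}\mathbf{V}^\top$ from Section~\ref{sec:bayes-regression}. The standard side becomes $\mathbf{V}(\alpha\mathbf{I}+\beta\mathbf{S}^\top\mathbf{S})^{-1}\mathbf{S}^\top\mathbf{U}^\top$ and the reformulated side becomes $\mathbf{V}\mathbf{S}^\top(\alpha\mathbf{I}+\beta\mathbf{S}\mathbf{S}^\top)^{-1}\mathbf{U}^\top$. Both diagonal-type middle factors have entries $s_i/(\alpha+\beta s_i^2)$, so the two sides agree. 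This also makes clear that the server-side eigendecomposition of $\mathbf{X}\mathbf{X}^\top$ used in Algorithm~\ref{alg:v-fed-blr} involves the same nonzero eigenvalues $\mathbf{\Lambda}$.
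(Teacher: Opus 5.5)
Your proof is correct and is essentially the paper's argument. Both reduce the claim to $(\alpha\mathbf{I}+\beta\mathbf{X}^\top\mathbf{X})^{-1}\mathbf{X}^\top=\mathbf{X}^\top(\alpha\mathbf{I}+\beta\mathbf{X}\mathbf{X}^\top)^{-1}$. The paper gets this by citing the Woodbury corollary $(A+UV)^{-1}U=A^{-1}U(I+VA^{-1}U)^{-1}$ with $A=\alpha\mathbf{I}$, $U=\mathbf{X}^\top$, $V=\beta\mathbf{X}$, while you verify it directly from $\mathbf{X}^\top(\alpha\mathbf{I}+\beta\mathbf{X}\mathbf{X}^\top)=(\alpha\mathbf{I}+\beta\mathbf{X}^\top\mathbf{X})\mathbf{X}^\top$. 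Your version is self-contained and states the invertibility hypothesis $\alpha>0$ that the paper leaves implicit.

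One small correction to a side remark: the updates in Equation~\eqref{eq:blr-alpha-beta} stay strictly positive and finite only if the Gamma hyperparameters are strictly positive, not merely nonnegative. This does not affect the identity itself.
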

\begin{proof}
To prove the equivalence of the two formulations, we show that 
$(\alpha\mathbf{I}+\beta\mathbf{X}^\top\mathbf{X})^{-1}\mathbf{X}^\top$ and
$\mathbf{X}^\top(\alpha\mathbf{I}+\beta\mathbf{X}\mathbf{X}^\top)^{-1}$ are equal.
Apply the Woodbury matrix identity \citep{Golub2013} with
$A=\alpha\mathbf{I}$, $U=\mathbf{X}^\top$, $V=\beta\mathbf{X}$. Then we have
$(A+UV)^{-1}U=(\alpha\mathbf{I}+\beta\mathbf{X}^\top\mathbf{X})^{-1}\mathbf{X}^\top$.
A corollary of the Woodbury identity gives $(A+UV)^{-1}U=A^{-1}U(I+VA^{-1}U)^{-1}$.
Substituting and simplifying yields
$A^{-1}U(I+VA^{-1}U)^{-1}=\alpha^{-1}\mathbf{X}^\top(\mathbf{I}+\beta\mathbf{X}\alpha^{-1}\mathbf{X}^\top)^{-1}$.
Factoring $\alpha^{-1}$ inside the inverse cancels with the leading $\alpha^{-1}$, giving
$\mathbf{X}^\top(\alpha\mathbf{I}+\beta\mathbf{X}\mathbf{X}^\top)^{-1}$,
which completes the proof.
\end{proof}
We assume a server performs the aggregation and iterative refinement, while
a single client holds the target vector, $\mathbf{Y}$, which is sent once to the server (Algorithm~\ref{alg:v-fed-blr}, step \ref{alg:v-fed-blr-y}). The sufficient statistic required for BLR becomes $\mathbf{X}\mathbf{X}^\top=\sum_c\mathbf{X}^{(c)}{\mathbf{X}^{(c)}}^\top$. After aggregating this statistic (step \ref{alg:v-fed-blr-xx}), the server performs eigen-decomposition $\mathbf{X}\mathbf{X}^\top=\mathbf{U}\mathbf{\Lambda}\mathbf{U}^\top$ (step \ref{alg:v-fed-blr-eig}) to facilitate efficient computation of $\mathbf{\check{\Sigma}}=\mathbf{U}(\alpha\mathbf{I}+\beta\mathbf{\Lambda})^{-1}\mathbf{U}^\top$ in each iteration (step \ref{alg:v-fed-blr-inv}). The updates of $\bm{\hat{\omega}}$ and $\mathbf{\check{\Sigma}}$ are tailored to the vertical setting as shown in Equation~\eqref{eq:v-omega-cov}.

Since the features are partitioned across clients, each client only computes its local subset $\bm{\hat{\omega}}^{(c)}$. At each iteration, the server broadcasts $\beta\mathbf{\check{\Sigma}}\mathbf{Y}$ (step \ref{alg:v-fed-blr-bsy}), and each client computes its local coefficients $\bm{\hat{\omega}}^{(c)}$ (step \ref{alg:v-fed-blr-omega}). The server aggregates the global prediction $\mathbf{\hat{Y}}$ (step \ref{alg:v-fed-blr-yhat}) and computes the error $\varepsilon$ (step \ref{alg:v-fed-blr-error}), and aggregates the squared norm $\|\bm{\hat{\omega}}\|_2^2$ (step \ref{alg:v-fed-blr-norm}), which is required to update $\beta$. Algorithm~\ref{alg:v-fed-blr} provides the full workflow.

\begin{theorem}
\label{thm:comm-VBLR}
The communication cost per client of Vertical Federated BLR is $O(n\min(n,m)+nt)$.
\end{theorem}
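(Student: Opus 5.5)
The argument will mirror the proof of Theorem~\ref{thm:comm-HBLR}: walk through Algorithm~\ref{alg:v-fed-blr} step by step, record the size of every message a single client sends or receives, and sum these sizes. Here $n$ is the number of aligned samples shared by all clients, $m$ is the number of features held by a client (its block $\mathbf{X}^{(c)}$ is $n\times m$), and $t$ is the number of refinement iterations. The costs split into a one-time setup phase (steps~\ref{alg:v-fed-blr-y}--\ref{alg:v-fed-blr-xx}) and a per-iteration phase (steps~\ref{alg:v-fed-blr-bsy}--\ref{alg:v-fed-blr-norm}).

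\textbf{Setup phase.} In step~\ref{alg:v-fed-blr-y}, the client holding the target sends $\mathbf{Y}$ once, which costs $O(n)$. In step~\ref{alg:v-fed-blr-xx}, each client sends its contribution $\mathbf{X}^{(c)}{\mathbf{X}^{(c)}}^\top$. Sent naively this is an $n\times n$ matrix, i.e.\ $O(n^2)$. The point to establish is that it can instead be sent in factored form of size $O(n\min(n,m))$. This is the analogue of the reduction used in Theorem~\ref{thm:comm-HBLR}, and it is the step needing the most care.

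\begin{itemize}
\item If $m<n$, the client can send $\mathbf{X}^{(c)}$ itself ($O(nm)$ entries), from which the server forms $\mathbf{X}^{(c)}{\mathbf{X}^{(c)}}^\top$. To avoid sending raw data, the client can instead compute a thin SVD $\mathbf{X}^{(c)}=\mathbf{U}_c\mathbf{S}_c\mathbf{V}_c^\top$ and send $\mathbf{U}_c\mathbf{S}_c$, an $n\times \min(n,m)$ matrix. This works because
\[
\mathbf{X}^{(c)}{\mathbf{X}^{(c)}}^\top=(\mathbf{U}_c\mathbf{S}_c)(\mathbf{U}_c\mathbf{S}_c)^\top .
\]
\item If $m\ge n$, the client sends the full $n\times n$ Gram matrix, which is $O(n^2)=O(n\min(n,m))$.
\end{itemize}

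In both cases the rank of $\mathbf{X}^{(c)}{\mathbf{X}^{(c)}}^\top$ is at most $\min(n,m)$, which yields the bound $O(n\min(n,m))$ for this step. Since this term is at least $\Omega(n)$, it also absorbs the $O(n)$ cost of sending $\mathbf{Y}$.

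\textbf{Per-iteration phase.} In each iteration, the server broadcasts $\beta\mathbf{\check{\Sigma}}\mathbf{Y}$ (step~\ref{alg:v-fed-blr-bsy}), a vector of length $n$, so $O(n)$. Each client then computes $\bm{\hat{\omega}}^{(c)}$ locally (step~\ref{alg:v-fed-blr-omega}) with no communication. Next, each client sends its partial prediction $\mathbf{X}^{(c)}\bm{\hat{\omega}}^{(c)}$ (step~\ref{alg:v-fed-blr-yhat}), also a length-$n$ vector, so $O(n)$. Finally, each client sends the scalar $\|\bm{\hat{\omega}}^{(c)}\|_2^2$ (step~\ref{alg:v-fed-blr-norm}), which is $O(1)$. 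Each iteration therefore costs $O(n)$ per client, and over $t$ iterations this totals $O(nt)$.

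\textbf{Conclusion.} Adding the setup and iterative phases gives $O(n\min(n,m)+nt)$ per client, as claimed. The only nontrivial point is the factored transmission of the Gram matrix. Everything else is direct counting of vector lengths, and this counting is independent of how many clients participate, so the bound is genuinely per client.
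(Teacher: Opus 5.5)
Your proposal is correct and matches the paper's proof essentially step by step: $O(n)$ for sending $\mathbf{Y}$, the $n\times n$ Gram matrix reduced to an $n\times\min(n,m)$ factor by a local decomposition, and $O(n)$ per iteration over $t$ iterations. Your thin-SVD factor $\mathbf{U}_c\mathbf{S}_c$ is simply an explicit form of the paper's ``local eigen-decomposition.'' Counting the norm message as $O(1)$ and including the $O(n)$ server broadcast are minor refinements of the paper's accounting and do not change the bound.
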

\begin{proof}
Note $m$ is the number of features per client, not the total number of features across all clients. And $n$ is the number of samples, which is the same across all clients in the vertical setting.
In step \ref{alg:v-fed-blr-y}, the client holding $\mathbf{Y}$ sends it to the server, which costs $O(n)$. In step \ref{alg:v-fed-blr-xx}, each client communicates $\mathbf{X}^{(c)}{\mathbf{X}^{(c)}}^\top$, which has size $O(n^2)$ but can be reduced to $O(n\min(n,m))$ via local eigen-decomposition. During the iterative refinement phase, at each iteration, clients communicate predictions in step \ref{alg:v-fed-blr-yhat} and parameter norms in step \ref{alg:v-fed-blr-norm}, each amounting to $O(n)$ per client. Across $t$ iterations, this contributes $O(nt)$ communication. Thus, the total per-client communication cost is dominated by the initial aggregation of $\mathbf{X}^{(c)}{\mathbf{X}^{(c)}}^\top$ plus iterative refinement, yielding $O(n\min(n,m)+nt)$.
\end{proof}

We note a potential privacy risk in step \ref{alg:v-fed-blr-y}, where the client holding label $\mathbf{Y}$ sends it to the server in order to update the model parameter, which could potentially expose sensitive information. 
To mitigate this risk, we can employ cryptographic techniques such as secure multi-party computation (MPC) or homomorphic encryption to ensure that the server can compute the necessary statistics without directly accessing the raw labels. Additionally, if we assume that the server is the same entity as the client holding $\mathbf{Y}$, then step \ref{alg:v-fed-blr-y} could be omitted, as the server would already have access to the labels.
\section{Empirical Evaluation}
\label{sec:expt}
We study the impact of preprocessing in a federated learning environment and address some practical questions:  
(1) To what extent does preprocessing improve model performance? 
(2) How do different preprocessing choices behave under both IID and non-IID data partitions? 
(3) What are the actual communication costs per client of different preprocessing methods?

\textbf{Experiment Setup.}
Referring back to the possible strategies outlined in Section~\ref{sec:intro}, 
Centralized preprocessing (Option 1) is not feasible in realistic federated learning scenarios, and transfer preprocessing (Option 3) is not suitable when client distributions differ. Therefore, we evaluate three options: no preprocessing (Option 2), local preprocessing (Option 4), and federated preprocessing (Option 5). Experiments are conducted on three public tabular datasets: Adult \citep{Becker1996}, Bank \citep{Moro2012}, and Cover \citep{Blackard1998}. Twenty percent of each dataset is held out for testing, and the remaining eighty percent is used for training. Table~\ref{tab:data-stats} in the Appendix summarizes the dataset statistics.

We focus on two representative preprocessing techniques, \OrdinalEncoder\ and \StandardScaler, given their importance for tabular data where categorical features need encoding and feature magnitudes require normalization. For model training, we use FedAvg \citep{McMahan2017} with the Adam optimizer \citep{Kingma2015} to train Logistic Regression (LR) and a Multi Layer Perceptron (MLP) with two hidden layers of size 128 and 64. Each experiment runs for 100 communication rounds with one local epoch per round and a batch size of 32. The learning rate is tuned from $\{10^{-5}, 10^{-4}, 10^{-3}, 10^{-2}, 10^{-1}\}$, and results are averaged over five runs.
We evaluate both 10-client and 30-client configurations under IID and non-IID (label skew and feature skew) data partitioning.
Code is available at \url{https://github.com/xuefeng-xu/fl-tabular}.

\textbf{IID Data Partitioning.}
In the IID setting, data is randomly shuffled and uniformly allocated across clients, so each client has a distribution close to the global distribution. Figure~\ref{fig:acc-iid-30client} and \ref{fig:acc-iid-10client} (in the Appendix) show the test accuracy over communication rounds for the three preprocessing options, and Table~\ref{tab:acc} (in the Appendix) reports the final accuracies.
Note that we include error bars of the standard deviation in the plots (in the form of the shaded regions). 
However, since the variation is very small, so the error bars are often barely visible especially for the federated preprocessing option.
\begin{figure}[t]
\centering
\includegraphics[width=\columnwidth]{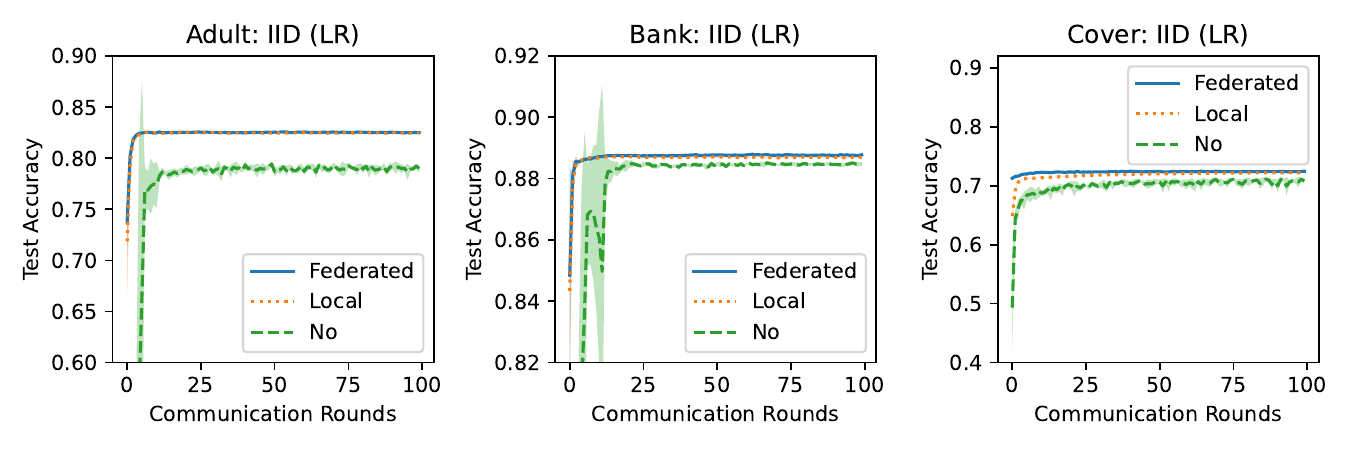}
\includegraphics[width=\columnwidth]{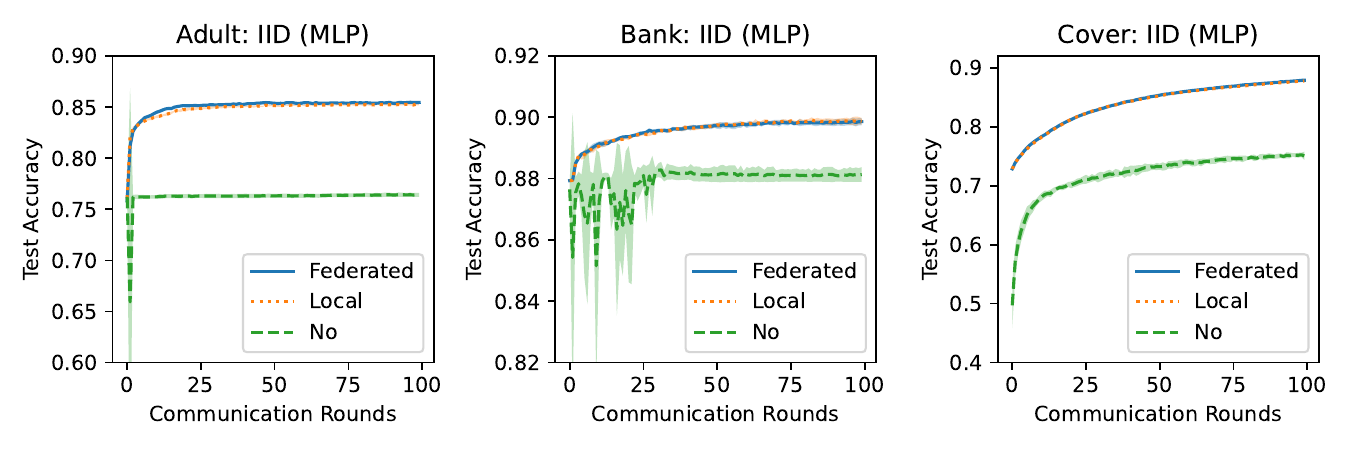}
\caption{
	Test accuracy comparison in the IID setting (\# Clients = 30).}
\label{fig:acc-iid-30client}
\end{figure}
Preprocessing consistently improves model performance across all datasets and models. On the Cover dataset with the MLP model, preprocessing increases accuracy by 17\%. On the Adult dataset, the improvement is 5\% for Logistic Regression and 12\% for the MLP model. For the Bank dataset, the improvement is smaller, around 1 to 2\%. 
As we would expect, under the IID setting, the local and federated preprocessing options behave similarly because local statistics align closely with global statistics.
However, we note that, for both the IID and non-IID setting, the results obtained by the federated approach are \textit{identical} to those that would be obtained if we were able to centralize all the data. 
The reason is that for these tasks, the federated statistics computed are mathematically equivalent to what we would compute centrally.

Note that on the Bank dataset, the improvement from preprocessing is smaller than on the other two datasets, roughly 1-2\%.
We think this is due to the characteristics of the dataset. First, most of the features in the Bank dataset are categorical, i.e., 9 out of 16 features are categorical, and the remaining 7 features are numerical. This reduces the impact of feature scaling relative to datasets with predominantly continuous features, e.g., the Cover dataset. Second, most categorical features have low cardinality (typically fewer than five categories), implying that local preprocessing and federated preprocessing produce highly similar encoding parameters even under heterogeneous data partitioning. Finally, although the improvement is smaller, we emphasize that preprocessing such as encoding is still necessary to convert categorical features into a numerical format suitable for model training.

\textbf{Non-IID Data Partitioning.}
In the non-IID setting, data is partitioned using a label distribution skew and feature distribution skew strategies.
For each client $j$ in the label skew setting, we sample $p_{k,j}$ from a Dirichlet distribution with parameter $\alpha=0.5$ and assign a $p_{k,j}$ fraction of examples with label $k$ to that client. This creates heterogeneous label distributions, which is a common scenario in non-IID federated learning \citep{Li2022}.
For the feature skew setting, we identify the continuous feature with the highest mutual information with the target label and generate feature skew by sorting the dataset on this feature prior to partitioning it.
\begin{figure}[t]
\centering
\includegraphics[width=\columnwidth]{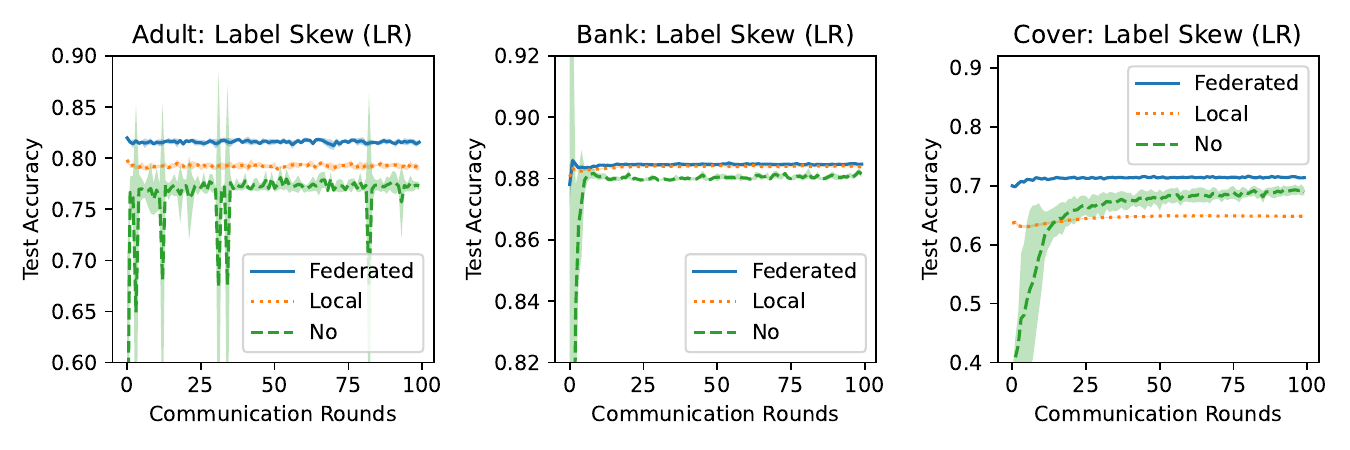}
\includegraphics[width=\columnwidth]{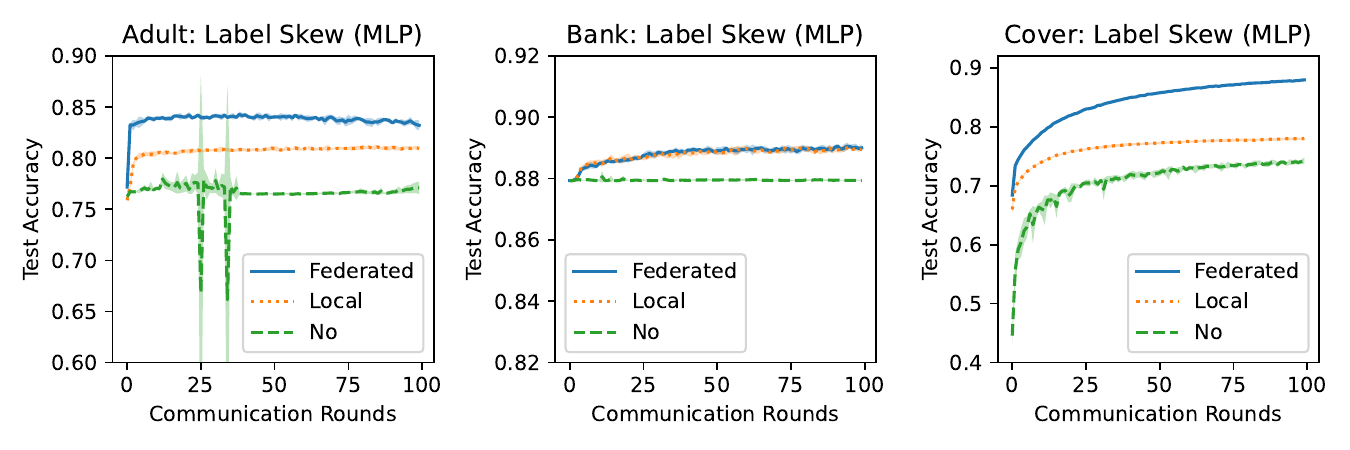}
\caption{
	Test accuracy comparison in the label distribution skew setting (\# Clients = 30).}
\label{fig:acc-labelskew-30client}
\end{figure}
\begin{figure}[t]
\centering
\includegraphics[width=\columnwidth]{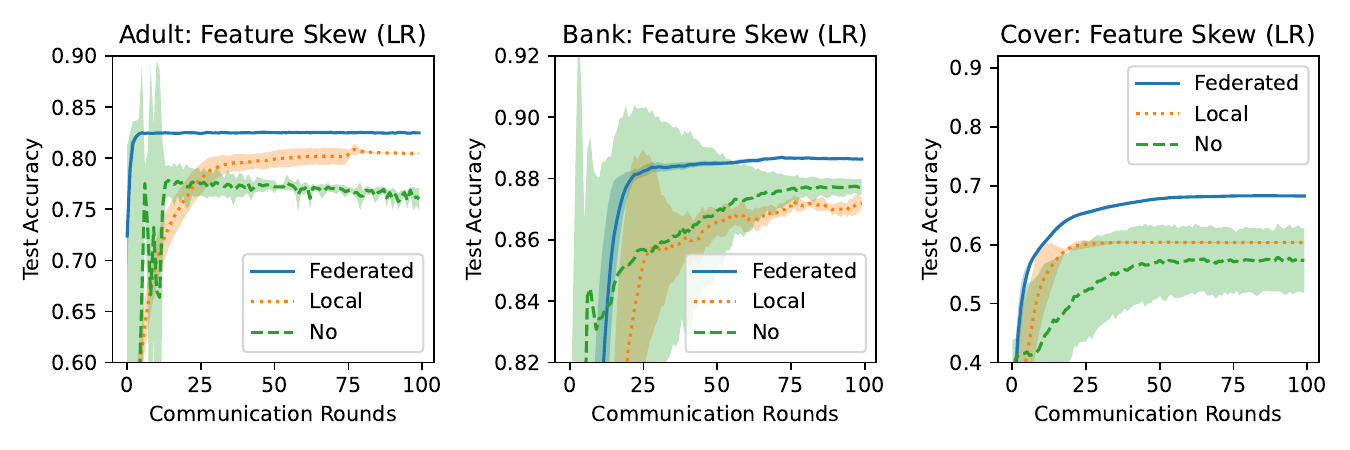}
\includegraphics[width=\columnwidth]{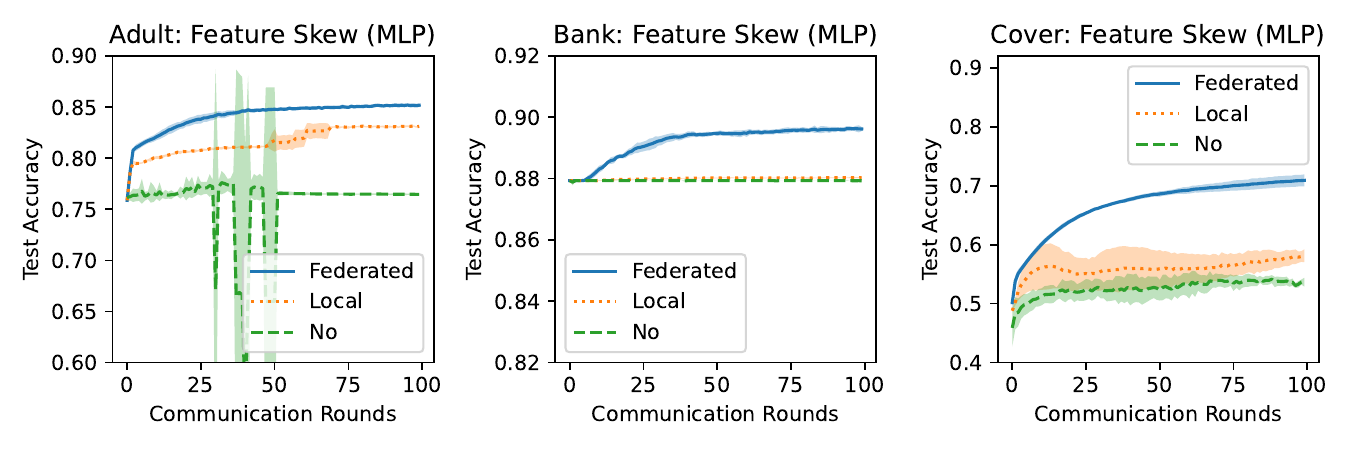}
\caption{
	Test accuracy comparison in the feature distribution skew setting (\# Clients = 30).}
\label{fig:acc-featureskew-30client}
\end{figure}
Results in Figures~\ref{fig:acc-labelskew-30client} and \ref{fig:acc-featureskew-30client} (with additional Figures \ref{fig:acc-labelskew-10client}, \ref{fig:acc-featureskew-10client} and Table~\ref{tab:acc} presented in the Appendix) show that preprocessing still provides substantial performance improvement over no preprocessing, especially on the Adult and Cover datasets. However, local preprocessing now performs appreciably worse than federated preprocessing. For instance, on the Cover dataset in label skew setting with the MLP model, local preprocessing yields an accuracy that is 11\% lower than federated preprocessing. For Logistic Regression on the same dataset, the gap is 8\%, which performs even worse than no preprocessing due to inconsistent local statistics. 
In the feature skew setting, the gap is even larger on the Cover dataset, with local preprocessing yielding an accuracy that is 18\% lower than federated preprocessing for the MLP model and 12\% lower for Logistic Regression.
These results demonstrate that consistent preprocessing is essential for federated learning, particularly when client distributions differ.

\textbf{Communication cost.}
To validate the communication cost analysis in Section~\ref{sec:comm}, we measure the actual communication cost per client for different preprocessing methods on datasets with 1000 samples per client. Table~\ref{tab:comm-cost} (in the Appendix) reports the total communication cost in kilobytes (KB). The results align with our theoretical analysis. \StandardScaler\ incurs minimal overhead of approximately 0.6 KB per client on the Adult dataset, while \KBinsDiscretizer\ with quantile-based and $k$-means-based binning incurs substantially higher costs of around 18 KB and 61 KB, respectively, due to quantile sketch aggregation and iterative clustering. Similarly, \SimpleImputer\ with median and most-frequent strategies costs more than the mean strategy. For Federated Bayesian Regression in the horizontal setting, the Adult dataset incurs approximately 3 KB per client, reflecting the $O(m^2)$ cost of aggregating $\mathbf{X}^\top\mathbf{X}$, where $m$ is the number of features. For \PowerTransformer, the communication cost is relatively high at around 73 KB, reflecting the need to iteratively optimize transformation parameters for each feature. Overall, these measurements confirm the communication cost estimates and demonstrate the trade-offs between preprocessing complexity and communication efficiency in federated learning.

\textbf{Consistency with Centralized Preprocessing.}
Since the objective of FedPS is to faithfully reproduce the behavior of centralized preprocessing in a federated setting, we evaluate each representative preprocessor by comparing each federatedly transformed dataset with the output produced by the corresponding centralized implementation.
Note that we exclude binary features by experimental design because we choose to leave indicator variables unchanged.
We compute the mean squared error (MSE) between the federated and centralized outputs, and report the results in Table~\ref{tab:mse} (in the Appendix).
For the majority of preprocessors, the MSE is negligible (less than 0.1), indicating that the federated implementation closely matches the centralized version. For cases where the MSE equals to zero, the federated and centralized outputs are identical.
For \SplineTransformer\ using quantiles as the knots, the MSE is slightly higher, which is expected in this case due to the polynomial transformation (degree of 3) applied to the quantile-based knots.
\section{Discussion}
\label{sec:discuss}
\textbf{Existing work} on federated preprocessing is limited, as most frameworks focus on model training.
In particular, some prominent toolkits, including \textit{FATE} \citep{Liu2021} and \textit{SecretFlow} \citep{secretflow}, are primarily designed for secure multi-party model training and could potentially be extended to support a broader range of preprocessing methods.
\citet{Baunsgaard2021,Baunsgaard2022} proposed federated data preparation methods but support only a limited set of preprocessors, such as \StandardScaler\ and \KBinsDiscretizer\ with uniform binning.
Compared to existing frameworks, FedPS supports a wider range of preprocessing techniques with flexible, user-configurable parameters and explicit communication overhead analysis. For example, none of the existing implementations address dimensional explosion in \OneHotEncoder, which we handle by restricting category counts and filtering low-cardinality items using frequent-item sketches. Table~\ref{tab:existing-fl-preprocessing} in the Appendix summarizes the available options.

\textbf{Privacy} is another important aspect of federated preprocessing since sharing statistics can potentially leak information about local data.
Prior work has explored secure preprocessing protocols, such as secure multi-party computation for the Yeo-Johnson transform \citep{Marchand2022}, encoding schemes \citep{Hsu2022} and data normalization \citep{Cosgun2026} using fully homomorphic encryption. 
Privacy guarantees depend on the deployment setting and adversarial assumptions. Different security models, such as honest-but-curious servers or colluding clients, can be supported using standard mechanisms including secure aggregation, secure multiparty computation, and trusted execution environments, since we have identified the sufficient statistics required for each preprocessing method.
However, designing private protocols for complex statistics like quantiles requires careful consideration of trade-offs between latency, privacy, accuracy, and communication cost. Additionally, iterative preprocessing introduces practical challenges that complicate privacy protection, such as deciding whether intermediate results need to be kept private. Due to these complexities, we leave privacy-preserving preprocessing for future work.
\section{Conclusion}
\label{sec:conclusion}
This work highlights the essential yet often overlooked role of data preprocessing in federated learning. We introduce FedPS, a suite of tools that combines aggregated statistics, data sketches, and federated models. Experiments show that federated preprocessing substantially improves model accuracy, whereas inconsistent local preprocessing can reduce performance under non-IID data. Our framework targets the server-client architecture, the most widely adopted model for FL systems. Adapting FedPS to decentralized (peer-to-peer) settings would require additional mechanisms for direct client-to-client communication and consensus; see \citet{Jung2026} for a discussion of network-structured FL beyond the server-client model.

\section*{Acknowledgements}
We thank the anonymous reviewers for their valuable feedback.
XX is supported by a scholarship from the Department of Computer Science, University of Warwick. 

\newpage
\bibliography{fedps}
\bibliographystyle{tmlr}

\newpage
\appendix
\onecolumn

\section{Federated Machine Learning Models}
\label{sec:fl-model}
\subsection{Federated k-Means}
\label{sec:fed-kmeans}
The $k$-Means clustering algorithm is an unsupervised method for identifying $k$ cluster centroids ${\mu_1, \dots, \mu_k}$. Each iteration computes the distances between every data point and all centroids, assigns each point $x_i$ to the closest cluster $S_j$, and then updates each centroid as the mean of the points in that cluster: $\mu_j = \sum_{x_i \in S_j} x_i/n_j$, where $n_j$ is the number of points in cluster $S_j$.

In the horizontal federated setting, summarized in Algorithm~\ref{alg:h-fed-kmeans}, the server broadcasts the centroids to all clients. Each client locally assigns its data points to clusters and computes the sum and count of points in every cluster. The server aggregates these values to update the global centroids. The process repeats until convergence or until the maximum number of iterations is reached. Communication steps are color coded: blue for server receiving from clients, and orange for server sending to clients.
\begin{algorithm}[ht]
\caption{Horizontal Federated $k$-Means (Server)}
\label{alg:h-fed-kmeans}
\begin{algorithmic}[1]
\State {\bfseries Input:} Client $c$ holds $\{x_i^{(c)}\}$
\State Initialize centroids $\{\mu_1,\dots,\mu_k\}$
\Repeat
\State \colorbox{orange!20}{Broadcast centroids $\{\mu_1,\dots,\mu_k\}$ to all clients}
\State Each client assigns its samples $x_i^{(c)}$ to the closest cluster $S_j$
\State \colorbox{cyan!20}{\parbox{0.9\columnwidth}{Aggregate local sums $\{s_1^{(c)},s_2^{(c)},\dots,s_k^{(c)}\}$ where $s_j^{(c)}=\sum_{x_i^{(c)}\in S_j} x_i^{(c)}$ and counts $\{n_1^{(c)},\dots,n_k^{(c)}\}$}}
\State Update centroids $\mu_j=\sum_c s_j^{(c)}/\sum_c n_j^{(c)}$
\Until{Convergence or reaching the maximum number of iterations}
\State {\bfseries Output:} Centroids $\{\mu_1,\dots,\mu_k\}$
\end{algorithmic}
\end{algorithm}
\subsection{Federated Nearest Neighbors Regression}
\label{sec:fed-knn}
The $k$-Nearest Neighbors ($k$-NN) regression algorithm predicts the value of a target variable $y$ by averaging the target values of the $k$ closest samples to a given point $x$, typically measured using the Euclidean distance. Weighted averages can also be used, where weights are inversely related to distances.

In the horizontal setting \citep{Khedr2008}, each client locally computes the $k$ smallest distances between the query point $x_p$ and its own data, then sends these distances to the server. The server merges the candidates to obtain the global top $k$ neighbors, and then requests the corresponding labels from clients. Algorithm~\ref{alg:h-fed-knn} describes the full workflow.
\begin{algorithm}[ht]
\caption{Horizontal Federated $k$-NN Regression (Server)}
\label{alg:h-fed-knn}
\begin{algorithmic}[1]
\State {\bfseries Input:} Client $c$ holds data $\{x_i^{(c)}, y_i^{(c)}\}$; query point $x_p$
\State \colorbox{orange!20}{Broadcast $x_p$ to all clients}
\State \colorbox{cyan!20}{Collect local top-$k$ minimum distances $\{d_1^{(c)},\dots,d_k^{(c)}\}$ from each client}
\State Compute global top $k$ distances and identify their indices
\State \colorbox{orange!20}{Send the selected indices to the corresponding clients}
\State \colorbox{cyan!20}{Collect the corresponding values of $y^{(c)}$ and compute a (weighted) mean $\mu$}
\State {\bfseries Output:} Predicted value $\mu$
\end{algorithmic}
\end{algorithm}

In the vertical setting, different clients hold different features of the same samples. Each client computes a partial distance contribution, which the server aggregates to compute the full distance. The server then identifies the global $k$ nearest neighbors and sends their indices to the client responsible for prediction. The procedure is shown in Algorithm~\ref{alg:v-fed-knn}.
\begin{algorithm}[ht]
\caption{Vertical Federated $k$-NN Regression (Server)}
\label{alg:v-fed-knn}
\begin{algorithmic}[1]
\State {\bfseries Input:} Client $c$ holds data $\{x_i^{(c)}\}$ and query point $x_p^{(c)}$; one client holds the target $\{y_i\}$
\State Each client computes partial distance between $x_p^{(c)}$ and all samples
\State \colorbox{cyan!20}{Aggregate all partial distances to obtain full distances}
\State Select global top $k$ distances and their indices
\State \colorbox{orange!20}{Send indices to the client that holds the target values}
\State That client computes the (weighted) mean $\mu$ of the corresponding targets
\State {\bfseries Output:} Predicted value $\mu$
\end{algorithmic}
\end{algorithm}
\section{Federated Data Preprocessors}
\label{sec:fed-preprocessor}
\textbf{Scaling} adjusts data to a specific range before model training. Common techniques include:
\begin{itemize}
	\item \MaxAbsScaler\ scales each feature so its maximum absolute value is one. It only requires the global maximum absolute value $|x|_{\max}$. The scaling rule is $x/|x|_{\max}$.
	\item \MinMaxScaler\ maps data to a target interval, typically $[0, 1]$. This requires the global minimum $x_{\min}$ and maximum $x_{\max}$. The default rule is $(x - x_{\min})/(x_{\max} - x_{\min})$.
	\item \RobustScaler\ uses quantiles, such as the lower quartile $Q_1$ (0.25), the median $Q_2$ (0.5), and the upper quantile $Q_3$ (0.75), to reduce sensitivity to outliers. Clients send quantile sketches, which are aggregated to obtain global quantiles. The scaling rule is $(x - Q_2)/(Q_3 - Q_1)$.
	\item \Normalizer\ rescales each data sample to have unit norm ($l_1$, $l_2$, or max norm). In horizontal federation, each client normalizes its samples independently. In vertical federation, the global norm of each sample must be computed by summing partial norms (for $l_1$ and $l_2$) or taking the maximum (for max norm) across clients, then dividing each feature value by the global norm, i.e., $x/\|x\|$.
\end{itemize}

\textbf{Encoding} categorical values into numerical representations is crucial for machine learning models.
\begin{itemize}
	\item All encoders require computing the global union of categories.
	\item \LabelBinarizer, \MultiLabelBinarizer, and \LabelEncoder\ are typically used for label encoding, which usually involves a single column.
	\item \OneHotEncoder\ and \OrdinalEncoder\ are used for feature encoding, often across multiple columns. They can optionally ignore infrequent categories or limit the number of output categories using a frequent-item sketch.
	\item \TargetEncoder\ \citep{MicciBarreca2001} assigns a value to each category based on the distribution of the target $Y$. For binary label $Y$, the encoded value for category $i$ is $\lambda(n_i)\frac{n_{iY}}{n_i}+(1-\lambda(n_i))\frac{n_Y}{n}$, where $n_i$ is the number of samples in category $i$, $n_{iY}$ is the number of samples in category $i$ with $Y=1$, $n_Y$ is the global number of positives. The shrinkage parameter $\lambda(n_i)=\frac{n_i}{m+n_i}$ depends on the smoothing factor $m=\sigma^2_i/\tau^2$, where $\sigma^2_i$ is the within-category variance and $\tau^2$ is the global variance of $Y$. Thus, this encoder requires computing global per-category means and variances of the target.
\end{itemize}

\textbf{Transformations} apply nonlinear operations to reshape feature distributions.
\begin{itemize}
	\item \PowerTransformer\ is a parametric method that aims to make data more Gaussian. The parameter $\lambda$ is estimated by maximizing the log-likelihood, which requires global sums and variances of the transformed data. After applying the transformation, \StandardScaler\ is used to obtain zero mean and unit variance. \citet{Xu2026} further discusses federated implementations with improved numerical stability.
	\item \QuantileTransformer\ is a non-parametric method that maps data to a Uniform or Gaussian distribution. For the uniform case, the transformation outputs the empirical CDF (cumulative distribution function) value. For the Gaussian case, it applies the inverse Gaussian CDF $\Phi^{-1}$ to that value. Both transformations require global quantiles, computed via a quantile sketch.
	\item \SplineTransformer\ constructs B-spline bases \citep{Boor1978}. It follows a procedure similar to \KBinsDiscretizer, where knot positions are chosen uniformly using global minimum and maximum values or along the global quantiles.
\end{itemize}

\textbf{Discretization} converts continuous variables into discrete categories.
\begin{itemize}
	\item \Binarizer\ applies a fixed threshold and does not require any federated computation.
\end{itemize}

\textbf{Imputation} addresses missing values in datasets.
\begin{itemize}
	\item \SimpleImputer\ is a univariate method that replaces missing values with the feature mean, median, or the most-frequent value. Means require global sums and counts. Medians use the quantile sketch, and the most-frequent value uses the frequent-item sketch.
	\item \IterativeImputer\ is a multivariate, machine-learning-model-based imputation method and BLR is used as the predictive model. The base case is the single-column scenario, where only one column has missing values. In this case, missing values in this column are treated as prediction targets, while the remaining columns are used as features. A regression model is fitted using samples with observed values and then used to predict missing entries. The more complex variant is the iterative case, where missing values may occur in multiple columns. In this setting, each column is treated as the target in a round-robin fashion, iteratively refining the imputed values. BLR is the default predictive model because it empirically performs well for imputation tasks.
\end{itemize}
\section{Additional Tables}
\begin{table}[ht]
\caption{Dataset statistics.}
\label{tab:data-stats}
\centering
\begin{tabular}{ccccc}
\toprule
Datasets & \# Examples & \# Features & \# Categorical & \# Classes \\
\midrule
Adult & 33K  & 14 & 8 & 2 \\
Bank  & 45K  & 16 & 9 & 2 \\
Cover & 581K & 54 & 0 & 7 \\
\bottomrule
\end{tabular}
\end{table}
\begin{table}[ht]
	\caption{Test accuracy comparison of FedAvg using Logistic Regression (LR) and Multi-Layer Perceptron (MLP) in IID and non-IID (label and feature distribution skew) settings with different preprocessing options.}
	\label{tab:acc}
\centering
\begin{tabular}{ccccccc}
\toprule
\# Clients & Partition & Model & Preprocessing & Adult & Bank & Cover \\
\midrule
\multirow{18}{*}{10}
& \multirow{6}{*}{IID}
& \multirow{3}{*}{LR}
& No & 0.79 $\pm$ 0.0037 & 0.88 $\pm$ 0.0009 & 0.71 $\pm$ 0.0013 \\
& & & Local & 0.83 $\pm$ 0.0005 & 0.89 $\pm$ 0.0004 & 0.72 $\pm$ 0.0002 \\
& & & Federated & 0.83 $\pm$ 0.0003 & 0.89 $\pm$ 0.0002 & 0.72 $\pm$ 0.0011 \\
\cline{3-7}
& & \multirow{3}{*}{MLP}
& No & 0.78 $\pm$ 0.0094 & 0.89 $\pm$ 0.0009 & 0.78 $\pm$ 0.0049 \\
& & & Local & 0.86 $\pm$ 0.0007 & 0.90 $\pm$ 0.0009 & 0.90 $\pm$ 0.0013 \\
& & & Federated & 0.86 $\pm$ 0.0016 & 0.90 $\pm$ 0.0015 & \textbf{0.91} $\pm$ 0.0013 \\
\cline{2-7}
& \multirow{6}{*}{Label skew}
& \multirow{3}{*}{LR}
& No & 0.79 $\pm$ 0.0048 & 0.88 $\pm$ 0.0005 & 0.71 $\pm$ 0.0030 \\
& & & Local & 0.81 $\pm$ 0.0014 & 0.89 $\pm$ 0.0002 & 0.62 $\pm$ 0.0007 \\
& & & Federated & \textbf{0.82} $\pm$ 0.0006 & 0.89 $\pm$ 0.0001 & \textbf{0.72} $\pm$ 0.0008 \\
\cline{3-7}
& & \multirow{3}{*}{MLP}
& No & 0.77 $\pm$ 0.0074 & 0.88 $\pm$ 0.0002 & 0.79 $\pm$ 0.0032 \\
& & & Local & 0.83 $\pm$ 0.0013 & 0.89 $\pm$ 0.0008 & 0.79 $\pm$ 0.0009 \\
& & & Federated & \textbf{0.85} $\pm$ 0.0026 & 0.89 $\pm$ 0.0011 & \textbf{0.90} $\pm$ 0.0004 \\
\cline{2-7}
& \multirow{6}{*}{Feature skew}
& \multirow{3}{*}{LR}
& No & 0.79 $\pm$ 0.0136 & 0.88 $\pm$ 0.0001 & 0.59 $\pm$ 0.0586 \\
& & & Local & 0.80 $\pm$ 0.0003 & 0.84 $\pm$ 0.0002 & 0.62 $\pm$ 0.0001 \\
& & & Federated & \textbf{0.83} $\pm$ 0.0004 & \textbf{0.89} $\pm$ 0.0001 & \textbf{0.68} $\pm$ 0.0005 \\
\cline{3-7}
& & \multirow{3}{*}{MLP}
& No & 0.79 $\pm$ 0.0118 & 0.88 $\pm$ 0.0001 & 0.56 $\pm$ 0.0196 \\
& & & Local & 0.82 $\pm$ 0.0024 & 0.88 $\pm$ 0.0002 & 0.62 $\pm$ 0.0033 \\
& & & Federated & \textbf{0.85} $\pm$ 0.0012 & \textbf{0.90} $\pm$ 0.0008 & \textbf{0.76} $\pm$ 0.0032 \\
\midrule
\multirow{18}{*}{30}
& \multirow{6}{*}{IID}
& \multirow{3}{*}{LR}
& No & 0.79 $\pm$ 0.0040 & 0.88 $\pm$ 0.0006 & 0.71 $\pm$ 0.0039 \\
& & & Local & 0.82 $\pm$ 0.0004 & 0.89 $\pm$ 0.0002 & 0.72 $\pm$ 0.0001 \\
& & & Federated & \textbf{0.83} $\pm$ 0.0003 & 0.89 $\pm$ 0.0001 & 0.72 $\pm$ 0.0006 \\
\cline{3-7}
& & \multirow{3}{*}{MLP}
& No & 0.76 $\pm$ 0.0020 & 0.88 $\pm$ 0.0024 & 0.75 $\pm$ 0.0070 \\
& & & Local & 0.85 $\pm$ 0.0009 & 0.90 $\pm$ 0.0011 & 0.88 $\pm$ 0.0012 \\
& & & Federated & 0.85 $\pm$ 0.0008 & 0.90 $\pm$ 0.0012 & 0.88 $\pm$ 0.0009 \\
\cline{2-7}
& \multirow{6}{*}{Label skew}
& \multirow{3}{*}{LR}
& No & 0.77 $\pm$ 0.0041 & 0.88 $\pm$ 0.0009 & 0.69 $\pm$ 0.0062 \\
& & & Local & 0.79 $\pm$ 0.0017 & 0.88 $\pm$ 0.0003 & 0.65 $\pm$ 0.0002 \\
& & & Federated & \textbf{0.82} $\pm$ 0.0022 & 0.88 $\pm$ 0.0001 & \textbf{0.71} $\pm$ 0.0014 \\
\cline{3-7}
& & \multirow{3}{*}{MLP}
& No & 0.77 $\pm$ 0.0062 & 0.88 $\pm$ 0.0002 & 0.74 $\pm$ 0.0060 \\
& & & Local & 0.81 $\pm$ 0.0006 & 0.89 $\pm$ 0.0003 & 0.78 $\pm$ 0.0017 \\
& & & Federated & \textbf{0.83} $\pm$ 0.0049 & 0.89 $\pm$ 0.0012 & \textbf{0.88} $\pm$ 0.0010 \\
\cline{2-7}
& \multirow{6}{*}{Feature skew}
& \multirow{3}{*}{LR}
& No & 0.76 $\pm$ 0.0109 & 0.88 $\pm$ 0.0024 & 0.57 $\pm$ 0.0552 \\
& & & Local & 0.80 $\pm$ 0.0004 & 0.87 $\pm$ 0.0018 & 0.60 $\pm$ 0.0008 \\
& & & Federated & \textbf{0.82} $\pm$ 0.0001 & \textbf{0.89} $\pm$ 0.0002 & \textbf{0.68} $\pm$ 0.0010 \\
\cline{3-7}
& & \multirow{3}{*}{MLP}
& No & 0.76 $\pm$ 0.0006 & 0.88 $\pm$ 0.0001 & 0.54 $\pm$ 0.0074 \\
& & & Local & 0.83 $\pm$ 0.0010 & 0.88 $\pm$ 0.0002 & 0.58 $\pm$ 0.0109 \\
& & & Federated & \textbf{0.85} $\pm$ 0.0017 & \textbf{0.90} $\pm$ 0.0007 & \textbf{0.71} $\pm$ 0.0102 \\
\bottomrule
\end{tabular}
\end{table}
\begin{table}[ht]
\caption{Communication cost per client for different federated preprocessors.}
\label{tab:comm-cost}
\centering
\begin{tabular}{cccc}
\toprule
Preprocessors & Adult & Bank & Cover \\
\midrule
\StandardScaler & 0.57 KB  & 0.61 KB & 1.21 KB \\
\OrdinalEncoder & 1.51 KB  & 0.69 KB & 0 KB \\
\OrdinalEncoder\ (ignore infreq.) & 1.91 KB  & 0.84 KB & 0 KB \\
\TargetEncoder & 24.89 KB  & 16.39 KB & 0 KB \\
\PowerTransformer & 73.46 KB & 75.18 KB & 360.18 KB \\
\KBinsDiscretizer\ (uniform) & 0.48 KB & 0.51 KB & 1.11 KB \\
\KBinsDiscretizer\ (quantile) & 18.88 KB & 21.53 KB & 55.05 KB \\
\KBinsDiscretizer\ (kmeans) & 61.00 KB & 96.40 KB & 193.36 KB \\
\SimpleImputer\ (mean) & 0.46 KB  & 0.48 KB & 0.82 KB \\
\SimpleImputer\ (median) & 18.40 KB & 21.02 KB & 70.83 KB \\
\SimpleImputer\ (most-frequent) & 22.29 KB & 20.01 KB & 55.88 KB \\
Bayesian Regression (horizontal) & 3.19 KB & 3.69 KB & 25.21 KB \\
\bottomrule
\end{tabular}
\end{table}
\begin{table}[ht]
\caption{Mean Squared Error (MSE) of transformed data between federated and centralized preprocessing.}
\label{tab:mse}
\centering
\begin{tabular}{cccc}
\toprule
Preprocessors & Adult & Bank & Cover \\
\midrule
\MaxAbsScaler & 0 & 0 & 0 \\
\MinMaxScaler & 0 & 0 & 0 \\
\StandardScaler & 0 & 0 & 0 \\
\RobustScaler & 0.0095 $\pm$ 0.0183 & 0.0009 $\pm$ 0.0008 & 0.0014 $\pm$ 0.0006 \\
\Normalizer & 0 & 0 & 0 \\
\OneHotEncoder & 0 & 0 & 0 \\
\OrdinalEncoder & 0 & 0 & 0 \\
\TargetEncoder & 9.808e-5 $\pm$ 5.431e-6 & 2.470e-5 $\pm$ 2.331e-6 & 0 \\
\KBinsDiscretizer\ (uniform) & 0 & 0 & 0 \\
\KBinsDiscretizer\ (quantile) & 0.0238 $\pm$ 0.0132 & 0.0652 $\pm$ 0.0413 & 0.0743 $\pm$ 0.0328 \\
\KBinsDiscretizer\ (kmeans) & 0 & 0 & 0 \\
\PowerTransformer & 0 & 0 & 0 \\
\QuantileTransformer\ (uniform) & 1.033e-5 $\pm$ 1.769e-6 & 1.582e-5 $\pm$ 1.982e-6 & 3.131e-5 $\pm$ 4.906e-6 \\
\QuantileTransformer\ (normal) & 0.0299 $\pm$ 0.0166 & 0.0638 $\pm$ 0.0164 & 0.0742 $\pm$ 0.0135 \\
\SplineTransformer\ (uniform) & 0 & 0 & 0 \\
\SplineTransformer\ (quantile) & 0.0406 $\pm$ 0.0173 & 0.2936 $\pm$ 0.0567 & 0.1103 $\pm$ 0.0084 \\
\bottomrule
\end{tabular}
\end{table}
\begin{table}[ht]
\caption{Support of federated data preprocessors across existing frameworks and FedPS.}
\label{tab:existing-fl-preprocessing}
\centering
\begin{tabular}{ccccc}
\toprule
Preprocessors & \href{https://fate.readthedocs.io/en/latest/2.0/fate/components/\#algorithm-list}{\textit{FATE} (v2.2.0)} & \href{https://www.secretflow.org.cn/en/docs/secretflow/v1.14.0b0/source/secretflow.preprocessing}{\textit{SecretFlow} (v1.14.0b0)} & \textit{Baunsgaard et al.} & \textit{FedPS} \\
\midrule
\MaxAbsScaler        & \xmark & \xmark & \xmark & \cmark \\
\MinMaxScaler        & \cmark & \cmark & \cmark & \cmark \\
\StandardScaler      & \cmark & \cmark & \cmark & \cmark \\
\RobustScaler        & \xmark & \xmark & \xmark & \cmark \\
\Normalizer          & \xmark & \xmark & \xmark & \cmark \\
\midrule
\LabelBinarizer      & \xmark & \xmark & \xmark & \cmark \\
\MultiLabelBinarizer & \xmark & \xmark & \xmark & \cmark \\
\LabelEncoder        & \cmark & \cmark & \xmark & \cmark \\
\OneHotEncoder       & \cmark & \cmark & \cmark & \cmark \\
\OrdinalEncoder      & \xmark & \cmark & \cmark & \cmark \\
\TargetEncoder       & \xmark & \xmark & \xmark & \cmark \\
\midrule
\PowerTransformer    & \xmark & \xmark & \xmark & \cmark \\
\QuantileTransformer & \xmark & \xmark & \xmark & \cmark \\
\SplineTransformer   & \xmark & \xmark & \xmark & \cmark \\
\midrule
\KBinsDiscretizer    & \cmark & \cmark & \cmark & \cmark \\
\midrule
\SimpleImputer       & \cmark & \xmark & \cmark & \cmark \\
\KNNImputer          & \xmark & \xmark & \xmark & \cmark \\
\IterativeImputer    & \xmark & \xmark & \xmark & \cmark \\
\midrule
\textbf{Coverage}    & 6/18   & 6/18   & 6/18   & 18/18 \\
\bottomrule
\multicolumn{5}{l}{Some preprocessors are renamed for consistency.}
\end{tabular}
\end{table}
\clearpage
\section{Additional Figures}
The following figures show the test accuracy comparison of FedAvg using Logistic Regression (LR) and Multi-Layer Perceptron (MLP) in IID and non-IID (label and feature distribution skew) settings with different preprocessing options. The number of clients is set to 10.
The results show that federated preprocessing consistently improves model accuracy, while inconsistent local preprocessing can lead to performance degradation under non-IID data distributions. These findings highlight the importance of proper data preprocessing in federated learning scenarios.
\begin{figure}[ht]
\centering
\includegraphics[width=\columnwidth]{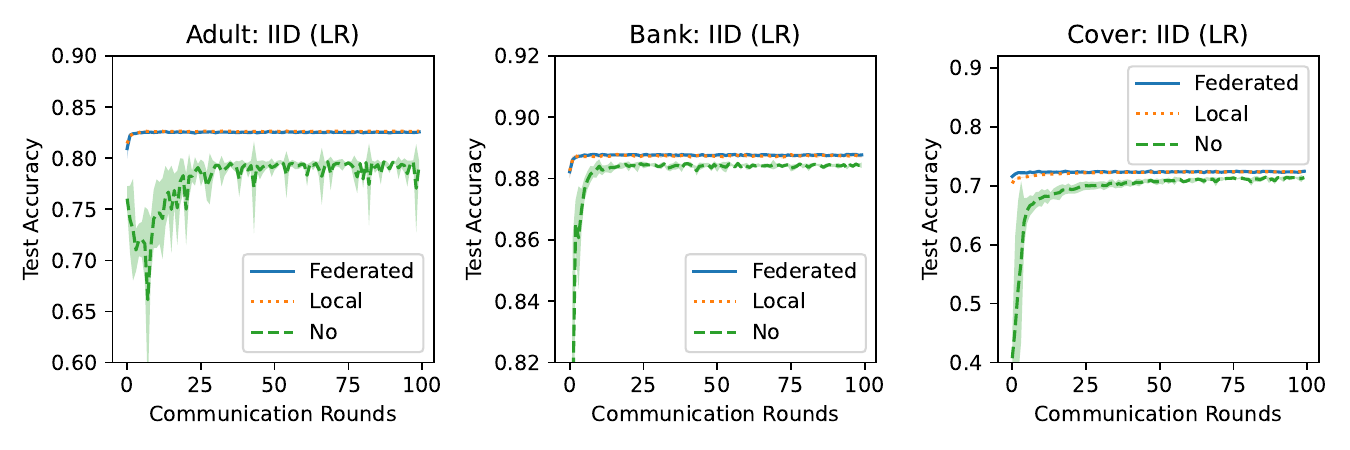}
\includegraphics[width=\columnwidth]{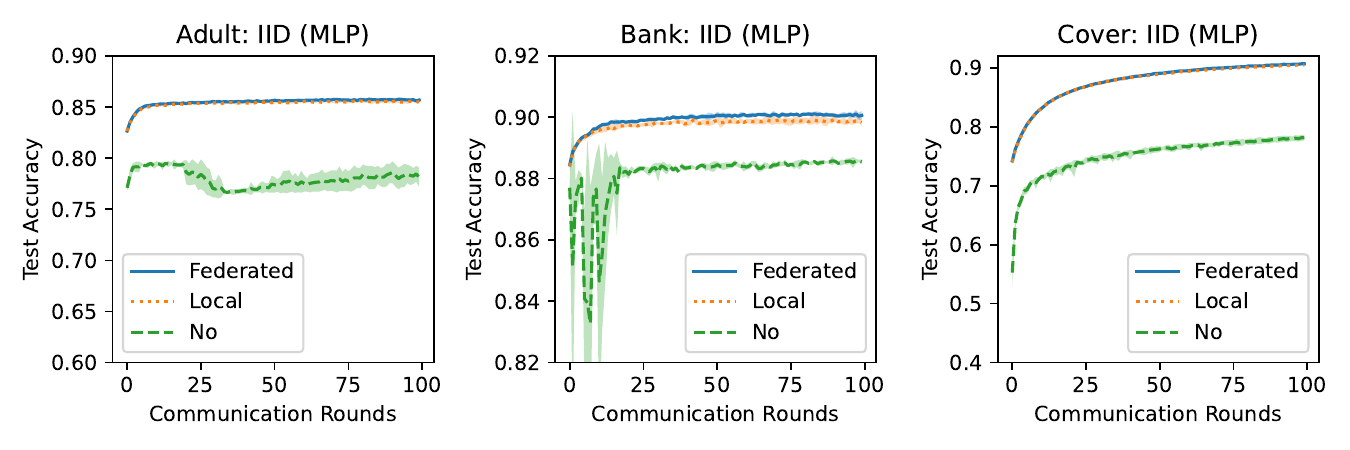}
\caption{
	Test accuracy comparison in the IID setting (\# Clients = 10).}
\label{fig:acc-iid-10client}
\end{figure}
\begin{figure}[ht]
\centering
\includegraphics[width=\columnwidth]{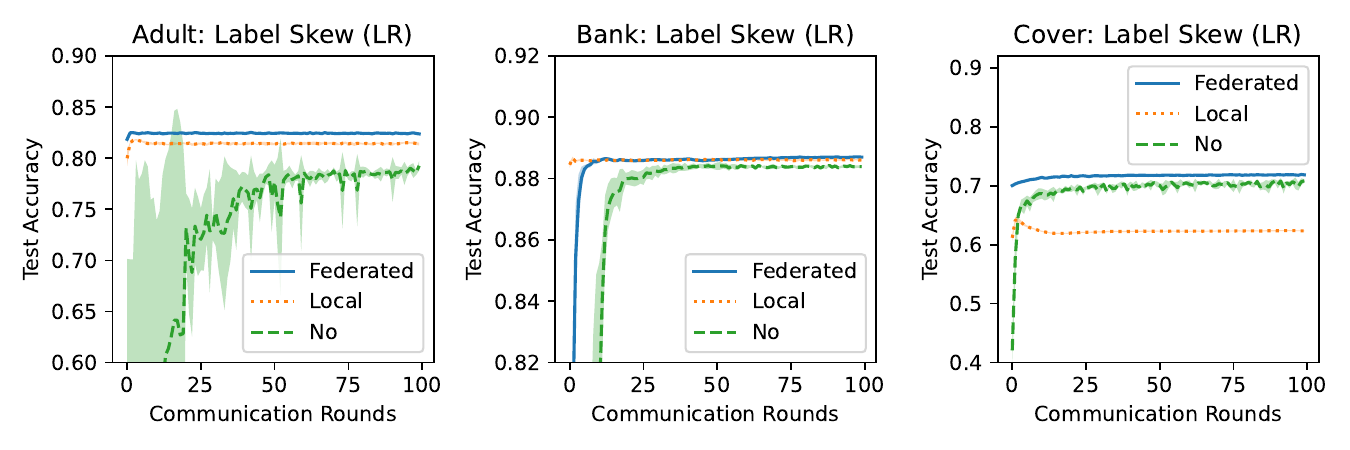}
\includegraphics[width=\columnwidth]{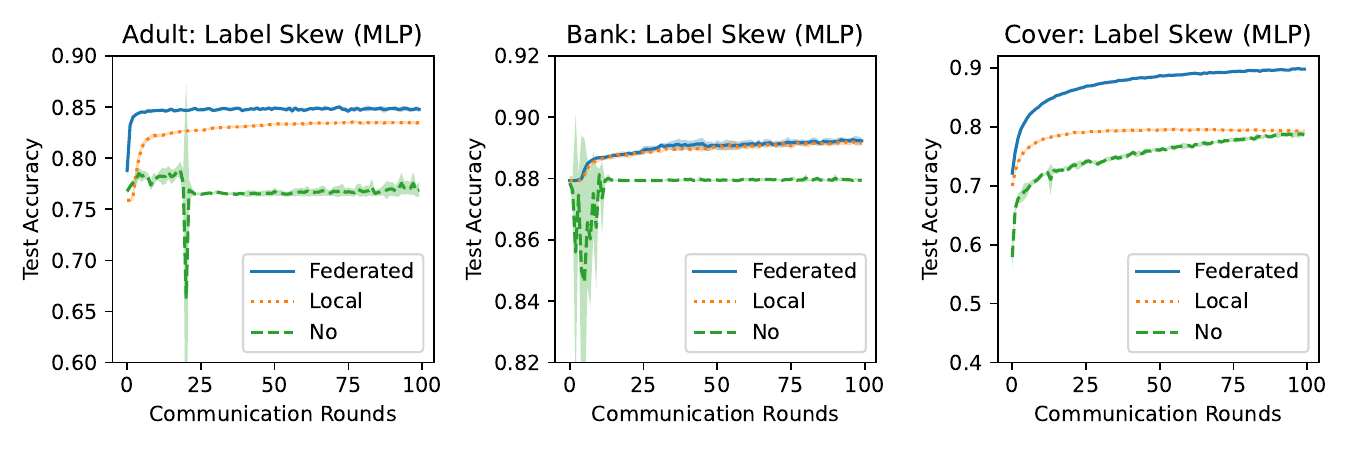}
\caption{
	Test accuracy comparison in the label distribution skew setting (\# Clients = 10).}
\label{fig:acc-labelskew-10client}
\end{figure}
\begin{figure}[ht]
\centering
\includegraphics[width=\columnwidth]{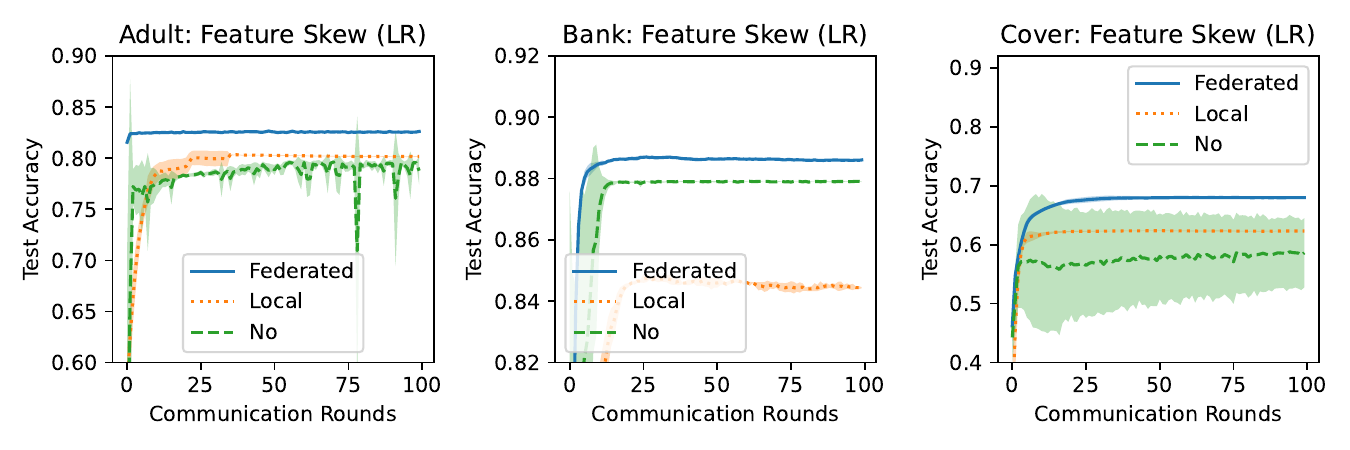}
\includegraphics[width=\columnwidth]{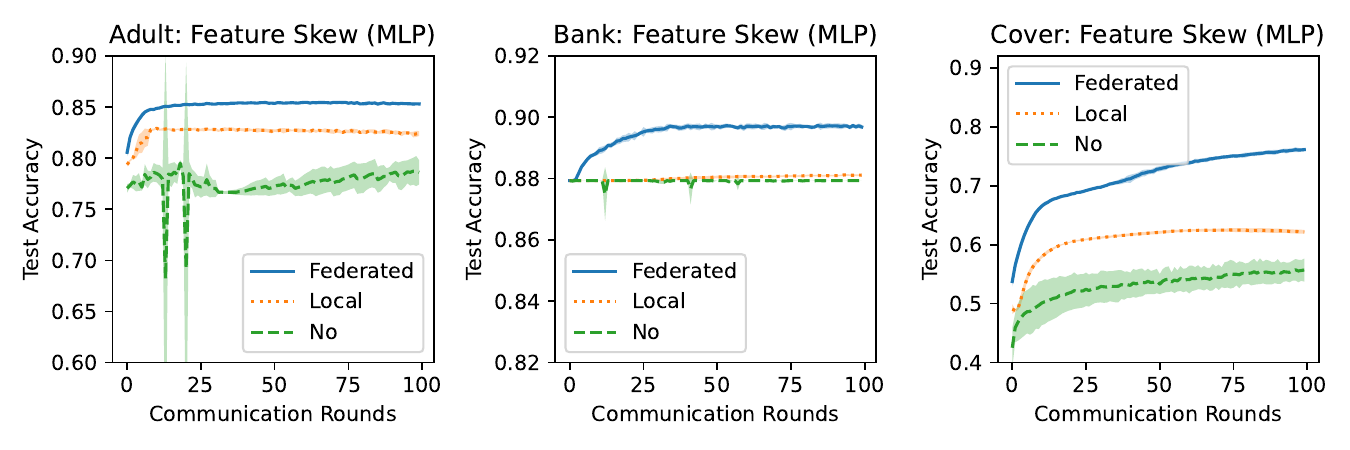}
\caption{
	Test accuracy comparison in the feature distribution skew setting (\# Clients = 10).}
\label{fig:acc-featureskew-10client}
\end{figure}
\end{document}